\documentclass[preprint,11pt,letterpaper]{elsarticle}

\usepackage[letterpaper,margin=1in]{geometry}
\usepackage{setspace}
\usepackage{amsmath,amssymb,amsfonts,bm}
\usepackage{amsthm}
\usepackage{algorithm}
\usepackage{algorithmic}
\usepackage{graphicx}
\usepackage{enumitem}
\usepackage{textcomp}
\usepackage{xcolor}
\usepackage{tikz}
\usetikzlibrary{calc}
\usepackage{subcaption}
\usepackage{multirow}
\usepackage{booktabs}
\usepackage{placeins}
\usepackage{float}
\usepackage{caption}
\usepackage[colorlinks=true,linkcolor=blue,citecolor=blue,urlcolor=blue]{hyperref}
\hypersetup{breaklinks=true}

\journal{Neural Networks}

\newtheoremstyle{bfplain}%
  {3pt}{3pt}%
  {\itshape}{}%
  {\bfseries}{.}{.5em}{}%
\theoremstyle{bfplain}
\newtheorem{lemma}{Lemma}
\newtheorem{proposition}{Proposition}

\begin{document}

\begin{frontmatter}

\title{Streaming Adversarial Robustness in Fuzzy ARTMAP: Mechanism-Aligned Evaluation, Progressive Training, and Interpretable Diagnostics}

\author[inst1]{Shane Cairns}
\ead{sacg3q@mst.edu}

\author[inst3]{Leonardo Enzo Brito da Silva}
\ead{leonardo.brito@imd.ufrn.br}

\author[inst1,inst2]{Sasha Petrenko}
\ead{sasha.petrenko@mst.edu}

\author[inst1,inst2]{Donald C. Wunsch II}
\ead{wunsch@mst.edu}

\author[inst1,inst2]{Jian Liu\corref{cor1}}
\ead{jliu@mst.edu}

\cortext[cor1]{Corresponding author.}

\affiliation[inst1]{organization={Department of Electrical and Computer Engineering},
            addressline={Missouri University of Science and Technology},
            city={Rolla},
            state={MO},
            country={USA}}
            
\affiliation[inst2]{organization={Kummer Institute Center for Artificial Intelligence and Autonomous Systems (KICAIAS)},
            addressline={Missouri University of Science and Technology},
            city={Rolla},
            state={MO},
            country={USA}}

\affiliation[inst3]{organization={Instituto Metr\'opole Digital, Universidade Federal do Rio Grande do Norte},
            addressline={Natal, RN 59078-900},
            country={Brazil}}

\begin{abstract}
Adversarial robustness has been studied extensively for offline deep networks, but much less is known about how attacks, defenses, and reliability signals behave in neural learners that update through strict single-pass streaming. This paper studies this problem in Fuzzy ARTMAP, an Adaptive Resonance Theory architecture whose decisions are governed by winner-take-all category competition, complement coding, match tracking, and replay-free prototype updates. We introduce WB-Softmax, a differentiable white-box attack surrogate aligned with ARTMAP's category-competition and map-field prediction mechanism, and we formalize a streaming evaluation principle requiring robustness to be assessed on the final deployed model rather than on stale intermediate states. We further examine replay-free adversarial training under streaming-compatible protocol choices, including offline versus online attack generation, selective updating, and progressive training. Across four image benchmarks, WB-Softmax provides a strong adaptive white-box evaluator, achieving 89--100\% attack success on vanilla Fuzzy ARTMAP models across the evaluated benchmarks. We show that defense rankings can reverse across evaluation protocols: offline adversarial training may appear strong under transfer attacks yet collapse under adaptive white-box evaluation, whereas progressive two-stage selective training achieves the strongest overall replay-free robustness. Finally, we show that ART's explicit category geometry supports interpretable diagnosis of structural and reliability failures, including \emph{separation collapse}---a failure mode in which different-class categories become increasingly overlapping during adversarial adaptation---and a reversal in match-score ordering after selective adversarial training. These results establish a mechanism-aligned, protocol-aware framework for adversarial robustness in streaming prototype-based learners.
\end{abstract}

\begin{keyword}
Adaptive Resonance Theory \sep Fuzzy ARTMAP \sep adversarial robustness \sep adversarial training \sep incremental learning
\end{keyword}

\end{frontmatter}

\section{Introduction}
\label{sec:intro}

Adversarial robustness is now a central requirement for neural learning systems, yet most existing methodology assumes a fixed model trained offline with repeated access to historical data. This assumption leaves a major gap for streaming neural architectures that must learn in a single pass, update their decision structure online, and operate without replay. In such systems, adversarial robustness is not only a question of perturbation size or attack strength; it also depends on when adversarial examples are generated, which evolving model state they target, and whether internal reliability signals remain meaningful after adaptation.

Since adversarial examples were first documented~\cite{szegedy2013intriguing}, a large literature has developed stronger attacks, robust training methods, and evaluation protocols~\cite{goodfellow2014explaining,madry2017towards,carlini2017towards,tramer2018ensemble,croce2020reliable,andriushchenko2020square,zhao2025holder}. This literature has also emphasized pitfalls such as gradient masking and the need for strong adaptive evaluation~\cite{athalye2018obfuscated,uesato2018adversarialrisk}, while RobustBench and recent surveys have helped standardize empirical robustness reporting and organize the broader adversarial-learning landscape~\cite{croce2021robustbench,qian2022survey,zhao2024advtraining_survey}. However, it remains centered on offline training with repeated access to historical data. In particular, both adversarial training and robustness evaluation usually assume either repeated optimization over the same dataset or repeated access to prior samples, assumptions that do not hold in strict single-pass streaming.

Recent work has begun examining the intersection of continual learning and adversarial robustness~\cite{kirkpatrick2017overcoming,parisi2019continual,delange2022survey}. At the same time, much of the broader continual-learning literature relies on replay, episodic memory, rehearsal, distillation, or other forms of repeated optimization~\cite{li2018lwf,lopez2017gem,chaudhry2019agem,rebuffi2017icarl}. Existing robust continual-learning methods likewise typically rely on replay, regularization, or memory-augmented training~\cite{mi2025adversarial_continual,bang2022online}. These mechanisms are valuable in their intended settings, but they are difficult to reconcile with strict single-pass streaming, where each sample is processed once and then discarded. Thus, much of the current continual-robustness literature addresses robust continual learning with memory, rather than robustness under the no-replay constraints that motivate truly streaming models.

Adaptive Resonance Theory (ART) networks~\cite{grossberg2013adaptive,grossberg2021conscious,dasilva2019survey} provide a particularly important setting for studying this gap. ART networks were designed for stable incremental learning, and Fuzzy ARTMAP remains one of the most established supervised ART architectures for single-pass classification. Unlike conventional deep networks, Fuzzy ARTMAP predicts through explicit category competition and map-field assignment, while learning proceeds through match tracking, category creation, and fast prototype updates. These mechanisms make it possible to diagnose robustness failures directly from internal category geometry, but they also make standard deep-network adversarial evaluation insufficient.

Fuzzy ARTMAP~\cite{carpenter1992fuzzy}, the supervised variant of ART, is especially relevant because it supports incremental classification through complement coding, match tracking, and explicit map-field supervision. Recent work has expanded the ART ecosystem through modular software implementations~\cite{melton2025artlib}, deep hierarchical extensions~\cite{melton2025deepartmap}, gradient-free deep-learning formulations inspired by ART dynamics~\cite{petrenko2025deepart}, and analysis of computational trade-offs induced by match-tracking mechanisms~\cite{melton2025matchtracking}. Recent robustness work on prototype-based or nonstandard models has also considered hyperspherical prototypes, discriminative prototype learners, and metric-learning perspectives~\cite{mygdalis2022hyperspherical,huai2022metric,sabzevar2025advdpnp,snell2017prototypical}. However, these studies do not address ARTMAP's particular combination of complement coding, winner-take-all category competition, fast one-pass category updates, and explicit match-based internal scores. This leaves open a distinct question: how should adversarial robustness be defined, attacked, trained, and interpreted in Fuzzy ARTMAP under its native strict-streaming regime?

A preliminary version of this work appeared in IJCNN 2026~\cite{Cairns2026IJCNN}. This journal version extends it from an empirical robustness study to a broader mechanism-aligned framework, adding a formal final-model streaming evaluation principle, reliability analysis of match-score inversion, geometry-based diagnosis of separation collapse, separation-aware training analysis, and derived unconditional robustness results.

This gap is not only empirical. Core ARTMAP operations---winner-take-all category competition, complement coding, and piecewise fast-learning updates---make standard gradient-based white-box attacks poorly matched to the model's actual prediction mechanism. Consequently, robustness conclusions can be misleading unless the attack objective is explicitly aligned with ARTMAP's competition-and-mapping structure. A second challenge is specific to streaming learning itself. In strict single-pass training, category boundaries evolve continuously as categories are created, absorbed, or reset. Adversarial examples generated against an earlier model snapshot can therefore become stale relative to the final deployed classifier. In this work, we treat this threat-model issue as a methodological question rather than only an implementation detail: in streaming learners, robustness should be assessed on the final streamed model using adaptive attacks crafted against that final state, rather than inferred from stale, transfer-only, or partially aligned perturbations.

These questions also arise beyond ARTMAP. Fuzzy ARTMAP is a particularly informative testbed because it combines strict single-pass adaptation, explicit prototype competition, interpretable internal geometry, and replay-free learning. This makes it well suited for studying a broader methodological problem: how adversarial robustness should be evaluated and improved in streaming prototype-based learners whose decision boundaries evolve continuously during deployment.

This paper studies adversarial robustness in streaming Fuzzy ARTMAP from three coupled perspectives: evaluation, training, and interpretability. More specifically, we ask three questions. First, how should adaptive white-box attacks be defined for a non-differentiable winner-take-all learner whose predictions are formed through category competition and map-field assignment? Second, under strict single-pass no-replay constraints, what training and evaluation protocol is required to avoid robustness overestimation caused by stale attacks on intermediate model states? Third, can ART's explicit category geometry be used not only to interpret post hoc behavior, but also to diagnose robustness failure modes online and motivate targeted replay-free interventions? This final question is also connected to deployment reliability: if internal scores are reused for rejection, abstention, or escalation, then their semantics must remain valid after adversarial training rather than only in the vanilla regime~\cite{chow1970reject,geifman2017selective,geifman2019selectivenet,hendrycks2017baseline}.

To answer these questions, we combine three methodological components. First, we develop WB-Softmax, a differentiable softmax relaxation that aggregates category-level choice values into class-level scores aligned with the ARTMAP map field, enabling strong adaptive white-box evaluation. Second, we distinguish offline and online adversarial-example generation under streaming updates, and compare standard, selective, and progressive two-stage training rules within the same replay-free setting. Third, we exploit ART's explicit category geometry through iCVI monitoring and overlap-based diagnostics to identify structural failure modes; we propose a separation-aware update rule as the first concrete intervention motivated by these diagnostics, and characterize its operational behavior including a structural limitation of overlap-only gating. Taken together, these components define a mechanism-aligned, protocol-aware, and interpretability-driven framework for adversarial robustness in strict streaming learners.

Our contributions are threefold. First, we establish a mechanism-aligned evaluation framework for adversarial robustness in streaming prototype-based neural learners, instantiated in Fuzzy ARTMAP. The framework includes WB-Softmax, a white-box attack objective aligned with ARTMAP's category competition and map-field structure, and a final-model streaming evaluation principle requiring robustness to be assessed on the deployed streamed model rather than on stale intermediate states. Empirically, we show that WB-Softmax PGD provides a strong adaptive evaluator, achieving 89--100\% attack success on vanilla Fuzzy ARTMAP models across the evaluated benchmarks and consistently exceeding transfer and query-based baselines at matched budgets.

Second, we show that replay-free robustness is a protocol property, not only an attack-strength property. Offline adversarial training can appear effective under transfer evaluation yet collapse under adaptive white-box evaluation, while progressive two-stage selective training provides the strongest overall replay-free robustness across USPS, MNIST, Fashion-MNIST, and EMNIST-Letters.

Third, we show that ART's internal geometry provides more than post-hoc interpretability: it enables online diagnosis of structural and semantic reliability failures. Geometry monitoring reveals \emph{separation collapse}, namely the progressive loss of cross-class geometric separation caused by adversarial adaptation, while match-score analysis uncovers match-score inversion, showing that internal trust signals calibrated on vanilla models may become unreliable after adversarial adaptation.

The remainder of this paper is organized as follows. Section~2 reviews the Fuzzy ARTMAP background. Section~3 formalizes the threat model and evaluation protocol for single-pass streaming robustness. Section~4 presents the attack suite, including the proposed WB-Softmax adaptive white-box attack and complementary black-box transfer baselines. Section~5 introduces interpretable diagnostics and replay-free training rules for streaming ARTMAP, including geometry-based monitoring, match-score analysis, and separation-aware training. Section~6 describes the experimental setup. Section~7 reports the results and discussion. Section~8 concludes the paper and outlines future directions. The Appendix provides the constructive proof of Proposition~1 and derived unconditional robustness tables corresponding to the conditional clean-correct evaluation reported in the main text.


\section{Background}
\label{sec:background}

Fuzzy ART~\cite{carpenter1991fuzzy} extends Adaptive Resonance Theory to continuous-valued inputs. Given an input feature vector $\bm{x}\in[0,1]^d$, complement coding forms
\begin{equation}
\bm{I}(\bm{x})=[\bm{x};\,\bm{1}-\bm{x}] \in [0,1]^{2d},
\end{equation}
so that the coded input $\bm{I}(\bm{x})$ has constant $L^1$ norm:

\begin{equation}
|\bm{I}(\bm{x})| = \sum_{i=1}^d x_i + \sum_{i=1}^d (1-x_i)=d.
\label{eq:complement-norm}
\end{equation}
Here $|\bm{v}|=\|\bm{v}\|_1=\sum_i v_i$ for $\bm{v}\in[0,1]^m$. This normalization is important in ART because it reduces sensitivity to raw input magnitude and helps mitigate category proliferation caused by norm variation.

Each category $j$ is represented by a weight vector $\bm{w}_j\in[0,1]^{2d}$, which defines a hyperbox-like region in complement-coded space. For an input $\bm{I}$, Fuzzy ART computes the \emph{match function}
\begin{equation}
M_j(\bm{I})=\frac{|\bm{I}\wedge \bm{w}_j|}{|\bm{I}|},
\label{eq:match}
\end{equation}
and the \emph{choice function}
\begin{equation}
T_j(\bm{I})=\frac{|\bm{I}\wedge \bm{w}_j|}{\alpha+|\bm{w}_j|},
\label{eq:choice}
\end{equation}
where $\wedge$ denotes element-wise minimum and $\alpha>0$ is the choice parameter. Categories compete through winner-take-all selection:
\begin{equation}
J=\arg\max_j T_j(\bm{I}),
\label{eq:winner}
\end{equation}
and the winning category $J$ is accepted if
\begin{equation}
M_J(\bm{I})\ge \rho,
\label{eq:vigilance}
\end{equation}
where vigilance $\rho\in[0,1]$ controls category granularity, with larger $\rho$ producing finer partitions. If vigilance fails, mismatch reset inhibits the current winner and the search continues.

Once a category $J$ is accepted, the general Fuzzy ART learning rule is
\begin{equation}
\bm{w}_J^{\text{new}}
=
\beta(\bm{I}\wedge \bm{w}_J^{\text{old}})
+
(1-\beta)\bm{w}_J^{\text{old}},
\qquad \beta\in(0,1],
\label{eq:fuzzyart-learning}
\end{equation}
where $\beta$ is the learning-rate parameter. The fast-learning case used in this paper corresponds to $\beta=1$, for which \eqref{eq:fuzzyart-learning} reduces to
\begin{equation}
\bm{w}_J^{\text{new}}=\bm{I}\wedge \bm{w}_J^{\text{old}}.
\label{eq:fuzzyart-fast-learning}
\end{equation}

Fuzzy ARTMAP~\cite{carpenter1992fuzzy} extends this mechanism to supervised learning by coupling an input module $\mathrm{ART}_a$ with a label module $\mathrm{ART}_b$ through a map field $F^{ab}$ that links input categories to class labels. With map-field vigilance $\rho_{ab}=1.0$, each learned category is mapped to exactly one class label. When a prediction error occurs, \emph{match tracking} raises the $\mathrm{ART}_a$ vigilance parameter $\rho_a$ just enough to reject the current winning category and trigger a search for, or creation of, a more specific category associated with the correct label. This mechanism allows ARTMAP to learn incrementally while preserving stable category-to-label assignments.

These architectural properties make ARTMAP fundamentally different from standard deep classifiers in the context of adversarial robustness. Prediction is determined by winner-take-all competition among explicit categories, while learning proceeds through piecewise category updates in a strictly incremental, single-pass manner. Consequently, both the internal score structure and the effective decision boundaries evolve during training. Standard gradient-based white-box attacks are therefore not directly aligned with ARTMAP's prediction mechanism, and robustness evaluation must account not only for attack strength but also for boundary evolution induced by streaming updates. This motivates the dedicated threat-model and evaluation formulation introduced in Section~\ref{sec:threat}.

A further consequence of ARTMAP's explicit category structure is that internal scores are interpretable. In particular, the winning-category match value $M_J(\bm{I})$ can be viewed as a measure of input compatibility with learned categories, which naturally suggests rejection or abstention rules based on a match threshold. While this intuition is often reasonable for vanilla ARTMAP, adversarial training can reshape post-training match statistics and even change their ordering. This motivates the diagnostic and theoretical analysis developed in Section~\ref{sec:defense}.

\section{Threat Model and Evaluation Protocol}
\label{sec:threat}

This section formalizes the threat model and evaluation protocol studied throughout the paper. Unlike conventional adversarial-robustness settings, which typically assume offline multi-epoch optimization and evaluate a fixed trained model under adaptive or transfer attacks \cite{carlini2017towards,croce2021robustbench}, we consider a strict single-pass streaming regime in which the classifier evolves continuously during training and past samples are not revisited. In this setting, adversarial-example generation is not merely an implementation detail: it changes the effective attack distribution encountered during training and therefore changes what robustness claims actually mean.

\subsection{Streaming Setting and Threat Model}

We consider supervised Fuzzy ARTMAP trained in a strict single-pass incremental setting. Let
\begin{equation}
f_t:\mathcal{X}\rightarrow\mathcal{Y},\qquad t=0,1,\ldots,T,
\label{eq:model-sequence}
\end{equation}
denote the model state after processing $t$ training samples, where $f_0$ is the initial model and $f_T$ is the final deployed model after the single streamed pass. Here, $\mathcal{X}\subseteq[0,1]^d$ denotes the input space and $\mathcal{Y}$ denotes the discrete label space. Each sample is processed exactly once, without replay buffers, rehearsal, or repeated optimization over historical data. This distinguishes our setting from much of the continual-learning literature, where replay, memory, or episodic correction are standard tools \cite{kirkpatrick2017overcoming,parisi2019continual}. Because Fuzzy ARTMAP updates categories through match tracking, category creation, reset, and fast learning, both its internal category structure and its effective decision boundary can change substantially over time.

Our evaluation target is therefore the robustness of the final streamed model $f_T$ under attacks crafted against $f_T$ itself. This is a methodological choice rather than a convenience: in deployed incremental systems, the relevant adversary interacts with the currently deployed model after adaptation has occurred, not with an intermediate checkpoint or a stale pre-update snapshot.

We consider both white-box and black-box adversaries, following standard robustness-evaluation practice \cite{carlini2017towards,croce2021robustbench}, but adapted to the streaming setting. In the white-box setting, the adversary has full access to ARTMAP internals, including category weights $\{\bm{w}_j\}$, the map field $F^{ab}$, and model hyperparameters; attacks are crafted against the final trained model $f_T$, so that the attack objective is aligned with the deployed decision boundary. In the black-box setting, the adversary has no access to ARTMAP internals but does have access to the training set and its ground-truth labels, allowing labeled surrogate models to be trained for transfer attacks.

A central issue in streaming learners is whether adversarial examples used during training are generated against a fixed model snapshot or against the current evolving model state. To formalize this distinction, we let $\mathcal{A}(f,\bm{x},y)$ denote an attack procedure applied to model $f$ to a labeled sample $(\bm{x},y)$, where $\bm{x}\in\mathcal{X}$ is an input and $y\in\mathcal{Y}$ is its ground-truth class label.

Under \textbf{offline attack generation}, the adversarial example associated with $(\bm{x},y)$ is crafted against a fixed reference model $f_{\mathrm{ref}}$ (typically a clean-trained or pre-adversarial model):
\begin{equation}
\bm{x}_{\mathrm{adv}}^{\mathrm{off}}=\mathcal{A}(f_{\mathrm{ref}},\bm{x},y),
\label{eq:xadv-off}
\end{equation}
and then used throughout subsequent training:
\begin{equation}
f_t=U(f_{t-1};\,\bm{x}_{\mathrm{adv}}^{\mathrm{off}},y).
\label{eq:offline-update}
\end{equation}
Because $f_t$ changes with $t$, perturbations crafted against $f_{\mathrm{ref}}$ need not remain aligned with the final streamed model $f_T$. This mismatch is largely hidden in standard batch settings, where training repeatedly revisits the same data distribution and the evaluated model is often the same model family against which the adversarial examples were generated \cite{tramer2018ensemble}.

Under \textbf{online attack generation}, the adversarial example is regenerated against the current model state at each training step:
\begin{equation}
\bm{x}_{\mathrm{adv},t}=\mathcal{A}(f_{t-1},\bm{x}_t,y_t),
\label{eq:xadv-on}
\end{equation}
\begin{equation}
f_t=U(f_{t-1};\,\bm{x}_{\mathrm{adv},t},y_t).
\label{eq:online-update}
\end{equation}
This keeps attack generation aligned with the evolving boundary and avoids the staleness induced by a fixed reference model.

In this work, robustness is always evaluated using attacks crafted against the final trained model $f_T$, regardless of how adversarial training examples were generated. This choice prevents robustness from being overstated due to boundary-shift artifacts or stale perturbations and defines the evaluation object used throughout the remainder of the paper.

\textbf{Principle 1 (Offline Distribution Mismatch in Streaming Learners).}
In single-pass incremental learners without replay, adversarial examples crafted against a fixed model snapshot need not remain aligned with the final streamed model because the decision boundary evolves during training. Consequently, robustness observed under stale, transfer-based, or otherwise non-adaptive attacks does not imply robustness under adaptive white-box attacks on the final deployed model.

Principle 1 is not a standard theorem inherited from batch adversarial learning; rather, it is the central methodological principle of our streaming setting. It defines the evaluation mismatch studied throughout the paper and explains why offline adversarial training in streaming learners can appear effective under transfer-based black-box evaluation while failing under adaptive white-box attacks on the final model.

\subsection{Attack Budget and Robustness Metrics}

We consider untargeted $\ell_{\infty}$-bounded attacks in normalized pixel space $[0,1]^d$ with budgets
\begin{equation}
\epsilon\in\{0.05,0.10,\ldots,0.35\},
\label{eq:eps-grid}
\end{equation}
while enforcing $\bm{x}_{\mathrm{adv}}\in[0,1]^d$.

For gradient-based attacks, FGSM \cite{goodfellow2014explaining} applies the one-step update
\begin{equation}
\bm{x}_{\mathrm{adv}}
=
\mathrm{clip}_{[0,1]}\!\left(\bm{x}+\epsilon\,\mathrm{sign}(\nabla_{\bm{x}}L)\right),
\label{eq:fgsm}
\end{equation}
where $L$ denotes the attack loss. PGD \cite{madry2017towards} performs $K$ iterative steps with step size $\eta=\epsilon/4$, random initialization within the $\ell_{\infty}$ ball, and projection back to the feasible ball after each step. These attack definitions are standard; what is specific to our work is the streaming-aligned white-box objective introduced later in Section~4, together with the requirement that evaluation target the final streamed model.

We report four related performance measures. \emph{Clean accuracy} is the standard classification accuracy on unperturbed test inputs. \emph{Adversarial accuracy at budget $\epsilon$}, denoted $\mathrm{Acc}_{\mathrm{adv}}(\epsilon)$, is the classification accuracy measured on adversarially perturbed inputs constrained by that budget. In the main text, when we refer to \emph{robust accuracy} as a function of $\epsilon$, we mean this adversarial-accuracy curve $\mathrm{Acc}_{\mathrm{adv}}(\epsilon)$. We use the term ``robust'' only in this evaluation sense, not in the sense of a certified guarantee.

To summarize performance across budgets, we report the Area Under the Robust Accuracy Curve (AURAC):
\begin{equation}
\mathrm{AURAC}
=
\frac{1}{\epsilon_{\max}-\epsilon_{\min}}
\int_{\epsilon_{\min}}^{\epsilon_{\max}}
\mathrm{Acc}_{\mathrm{adv}}(\epsilon)\,d\epsilon,
\label{eq:aurac}
\end{equation}
approximated by trapezoidal integration on the grid in \eqref{eq:eps-grid}. AURAC is not a new metric; we use it because it summarizes adversarial performance across a range of perturbation strengths rather than at a single operating point. This is especially useful in streaming settings, where different training protocols may behave differently at low and high $\epsilon$.

\subsection{Sanity Checks and Validity Criteria}

Following Athalye et al.\ \cite{athalye2018obfuscated}, we apply four validity checks adapted to streaming ART models:
\begin{enumerate}
    \item single-step FGSM achieves non-trivial attack success;
    \item iterative PGD is at least as strong as FGSM;
    \item white-box attacks match or exceed black-box transfer attacks; and
    \item accuracy degrades smoothly as $\epsilon$ increases.
\end{enumerate}
These checks are not novel in themselves, but they are essential in our setting to verify that the proposed attack-and-evaluation pipeline is measuring genuine vulnerability rather than artifacts of non-smooth winner-take-all computation.

This section defines the robustness object studied in the remainder of the paper: not robustness to arbitrary stale perturbations, but robustness of the final streamed ARTMAP model under attacks aligned with both its prediction mechanism and its final post-training state.

\section{White-Box Attacks on Fuzzy ARTMAP}
\label{sec:attacks}

This section instantiates the generic attack loss $\mathcal{L}$ introduced in
Section~\ref{sec:threat} for adaptive white-box evaluation of Fuzzy ARTMAP.
The core difficulty is architectural: ARTMAP predicts through winner-take-all category competition followed by map-field assignment, while its computation contains non-differentiable selection and piecewise operations. As a result, standard gradient-based white-box attacks are not naturally aligned with the mechanism by which ARTMAP actually makes decisions and may therefore produce weak or misleading optimization signals.

Accordingly, the main contribution of this section is not a new outer optimization routine, but an ARTMAP-aligned attack objective. We propose WB-Softmax, a differentiable surrogate that converts category-level competition into class-level attack scores in a manner consistent with the ARTMAP map field. Rather than replacing the forward model with a generic smooth proxy, WB-Softmax preserves the original ARTMAP forward computation and introduces differentiability only in the loss used for optimization.

\paragraph{WB-Softmax: softmax-relaxed class loss}
ARTMAP selects the winning category by hard maximization of the choice values and then predicts through the map field. Because this hard winner-take-all step is non-differentiable almost everywhere, it blocks direct gradient-based optimization. Our key idea is therefore to replace hard winner selection in the attack objective with a differentiable softmax relaxation over category choice values, and then aggregate the resulting probability mass at the class level according to the map field.

Let $C_c$ denote the set of categories mapped to class $c$. We define the class-level probability of class $c$ by summing the softmax mass of all categories assigned to that class:
\begin{equation}
p_c
=
\sum_{j\in C_c}
\frac{\exp(T_j/\tau)}
{\sum_k \exp(T_k/\tau)},
\label{eq:wbsoftmax}
\end{equation}
where $\tau>0$ is the softmax temperature. The WB-Softmax attack loss is then the negative log-likelihood of the true class:
\begin{equation}
L_{\mathrm{WB\text{-}Softmax}}
=
-\log(p_y).
\label{eq:wb-loss}
\end{equation}

This construction is specific to ARTMAP. The attack is driven by category-level
choice values and aggregated according to the class assignments encoded by the
map field, so the objective remains aligned with ARTMAP's actual
competition-and-mapping structure rather than with an external differentiable
proxy. We use probability-mass aggregation (sum over categories) rather than a
hard classwise max so that gradients remain informative even when multiple
same-class categories compete near the decision boundary.

We set $\tau=0.01$ based on attack-strength ablations. Lower temperatures
concentrate probability mass on near-winning categories while still preserving
useful gradient flow. Maximizing~\eqref{eq:wbsoftmax} therefore pushes
probability mass away from the true class and promotes misclassification.

\paragraph{Smooth surrogate construction under complement coding}
WB-Softmax introduces differentiability only in the aggregation step used to
construct the attack objective; the forward ARTMAP computation itself remains
unchanged. This differs from broader BPDA-style gradient substitutions that
modify the forward/backward interface more aggressively~\cite{athalye2018obfuscated}.
Our goal is not to replace ARTMAP with a generic smooth model, but to build a
white-box objective that respects its native category structure.

Under complement coding,
\begin{equation}
\bm{I}(\bm{x}) = [\bm{x};\,\bm{1}-\bm{x}],
\label{eq:comp-coded-attack}
\end{equation}
let $\bm{I}_{1:d}$ denote the first $d$ components of $\bm{I}(\bm{x})$ and let $\bm{I}_{d+1:2d}$ denote the last $d$ components. By construction,
\begin{equation}
\bm{I}_{1:d}=\bm{x},
\qquad
\bm{I}_{d+1:2d}=\bm{1}-\bm{x}.
\label{eq:comp-coded-blocks}
\end{equation}
The chain rule therefore gives
\begin{equation}
\nabla_{\bm{x}}L
=
\nabla_{\bm{I}_{1:d}}L
-
\nabla_{\bm{I}_{d+1:2d}}L.
\label{eq:chain}
\end{equation}
Equation~\eqref{eq:chain} ensures that gradients are propagated correctly through the coded representation: perturbations in $\bm{x}$ induce equal-magnitude, opposite-signed changes in the two halves of $\bm{I}(\bm{x})$. Thus, WB-Softmax preserves the original complement-coded forward representation while still enabling gradient-based white-box optimization.

\paragraph{WB-Softmax PGD attack}
We instantiate WB-Softmax inside a standard projected gradient descent (PGD)
outer loop with random start inside the $\ell_\infty$ ball. The novelty is
therefore the ARTMAP-aligned objective, not the PGD shell itself.
Algorithm~\ref{alg:wb-softmax} summarizes the complete attack procedure used
throughout this paper. We use $\mathcal{U}(-\epsilon,\epsilon)$ to denote
element-wise uniform noise in $[-\epsilon,\epsilon]$, and
$\Pi_{[\bm{x}-\epsilon,\bm{x}+\epsilon]}(\cdot)$ to denote projection onto the
$\ell_\infty$ ball around $\bm{x}$ via coordinate-wise clipping.

\begin{algorithm}[t]
\caption{WB-Softmax PGD Attack}
\label{alg:wb-softmax}
\small
\begin{algorithmic}[1]
\REQUIRE input $\bm{x}\in[0,1]^d$, ground-truth label $y\in\mathcal{Y}$, perturbation budget $\epsilon$, number of steps $K$, step size $\eta=\epsilon/4$, temperature $\tau$
\STATE $\bm{x}_{\mathrm{adv}} \leftarrow \mathrm{clip}_{[0,1]}\!\big(\bm{x} + \mathcal{U}(-\epsilon,\epsilon)\big)$
\FOR{$k=1$ \TO $K$}
  \STATE $\bm{I}\leftarrow[\bm{x}_{\mathrm{adv}};\,\bm{1}-\bm{x}_{\mathrm{adv}}]$
  \STATE $T_j \leftarrow \dfrac{\lvert \bm{I}\wedge \bm{w}_j\rvert}{\alpha+\lvert \bm{w}_j\rvert}\qquad \forall j$
  \STATE $q_j \leftarrow \dfrac{\exp(T_j/\tau)}{\sum_m \exp(T_m/\tau)}\qquad \forall j$
  \STATE $p_c \leftarrow \sum_{j\in\mathcal{C}_c} q_j \qquad \forall c$ \hfill $\triangleright$ class-level probabilities
  \STATE $\mathcal{L}\leftarrow -\log(p_y)$
  \STATE $\bm{g}\leftarrow \nabla_{\bm{x}_{\mathrm{adv}}}\mathcal{L}$ \hfill $\triangleright$ autograd (equiv.\ to Eq.~\eqref{eq:chain})
  \STATE $\bm{x}_{\mathrm{adv}} \leftarrow \Pi_{[\bm{x}-\epsilon,\bm{x}+\epsilon]}\!\big(\bm{x}_{\mathrm{adv}}+\eta\,\mathrm{sign}(\bm{g})\big)$
  \STATE $\bm{x}_{\mathrm{adv}} \leftarrow \mathrm{clip}_{[0,1]}(\bm{x}_{\mathrm{adv}})$
\ENDFOR
\RETURN $\bm{x}_{\mathrm{adv}}$
\end{algorithmic}
\end{algorithm}

WB-Softmax operationalizes the evaluation principle of
Section~\ref{sec:threat}: if robustness is to be assessed against adaptive
white-box attacks on the final streamed model, then the attack objective itself
must be aligned with the mechanism by which that model actually predicts. More
broadly, this illustrates a general lesson for non-smooth winner-take-all
learners: meaningful white-box evaluation requires a surrogate objective aligned
with the model's native decision rule, rather than with an unrelated
differentiable proxy.

\paragraph{Black-box transfer attacks}
For black-box evaluation, we train surrogate classifiers on the same training set and generate transfer attacks with PGD. Our primary surrogate is a SimpleCNN, and we additionally consider an LRS-regularized surrogate to strengthen transferability. These surrogates are used only to generate black-box transfer attacks; they do not alter the ARTMAP training or inference mechanism. Full surrogate architectures and training hyperparameters are deferred to Section~6.


\section{Interpretable Diagnostics and Reliability-Aware Training Rules}
\label{sec:defense}
Throughout this section, we use \emph{absorption} to mean that an incoming sample is assigned to an existing accepted category and updates that category under the fast-learning rule, rather than creating a new category.

ART exposes its internal state through explicit category geometry, enabling a diagnosis-to-rule workflow that is difficult to realize in black-box models. In our setting, this structure is not merely descriptive: it can be monitored online to detect structural failure modes that emerge during adversarial training, and it can be used to design lightweight replay-free interventions that act directly on category creation and fast-learning updates of existing categories. This makes ARTMAP unusual among streaming learners: the same structure that supports incremental learning also supports online interpretability and targeted robustness interventions without replay.

This capability is also relevant to deployment reliability. In streaming systems, internal quantities such as match may be reused for rejection, abstention, or reliability filtering~\cite{chow1970reject,geifman2017selective,geifman2019selectivenet,hendrycks2017baseline}. If adversarial training changes the relationship between such quantities and correctness, then trust signals calibrated on vanilla models may become unsafe after adaptation. Accordingly, this section studies two coupled questions: (i) how explicit category geometry can diagnose robustness failure modes online, and (ii) whether post-training internal scores, especially match-based scores, remain reliable proxies for correctness. We show that both questions admit actionable answers in ARTMAP: geometry monitoring exposes \emph{separation collapse}, a failure mode in which different-class categories become increasingly overlapping during adversarial adaptation, while match-score analysis reveals a reliability failure that can invalidate vanilla-calibrated rejection rules.

\subsection{Online Structural Diagnostics via iCVIs and Geometry Indicators}

Cluster validity indices (CVIs) play a central role in assessing clustering quality, and incremental cluster validity indices (iCVIs) provide online counterparts that can be updated recursively as samples arrive, including online iCVI formulations for streaming clustering~\cite{moshtaghi2019online,dasilva2020icvi,dasilva2023icviartmap}. In black-box deep models,
related geometric analyses typically require post-hoc embeddings or external
probes. In contrast, ART categories have explicit geometric representations, so class-conditional separation, overlap, and compactness can be computed directly from the current category set during training.

In this work, we use online structural diagnostics rather than post-hoc
visualization. We monitor two geometry indicators that are especially
informative during adversarial training: (i) \emph{minimum separation}, the
smallest non-overlap margin between different-class categories, and
(ii) \emph{overlap risk}, the maximum normalized intersection between
wrong-class category pairs. Because these indicators depend only on the current category weights $\{\bm{w}_j\}$, they can be updated online without storing past samples. These indicators are particularly useful for revealing failure modes that are not apparent from accuracy curves or category counts alone, and they provide actionable signals for designing targeted replay-free training rules. Section~\ref{sec:results} presents a case study illustrating these diagnostics on one of the benchmarks introduced in Section~\ref{sec:setup}.

\noindent\textbf{Geometry indicators.}
For the purpose of online overlap diagnostics, we use a diagnostic box representation in complement-coded space induced by the category weights. Let category $j$ be represented by lower and upper bounds $\bm{\ell}_j=\bm{w}_j$ and $\bm{u}_j=\bm{1}$, and let $\mathcal{C}(j)$ denote its mapped class. Here $\bm{\ell}_j,\bm{u}_j\in[0,1]^{2d}$. For two boxes $j$ and $k$, define the $L^1$ intersection length
\begin{equation}
\mathrm{Int}(j,k)
=
\big\lvert
\max(\bm{0},\,\min(\bm{u}_j,\bm{u}_k)-\max(\bm{\ell}_j,\bm{\ell}_k))
\big\rvert_1,
\label{eq:box-intersection}
\end{equation}
where the $\min(\cdot,\cdot)$ and $\max(\cdot,\cdot)$ operators inside \eqref{eq:box-intersection} act element-wise before the final $L^1$ aggregation. We then define the normalized overlap as
\begin{equation}
\mathrm{Ov}(j,k)
=
\frac{\mathrm{Int}(j,k)}
{\min(\lvert \bm{u}_j-\bm{\ell}_j\rvert_1,\;\lvert \bm{u}_k-\bm{\ell}_k\rvert_1)+\varepsilon_0},
\label{eq:overlap}
\end{equation}
where $\varepsilon_0$ avoids division by zero. We further define \emph{overlap risk} as
\begin{equation}
\mathrm{OR}
=
\max_{\mathcal{C}(j)\neq \mathcal{C}(k)} \mathrm{Ov}(j,k),
\label{eq:overlap-risk}
\end{equation}
and \emph{minimum separation} as the smallest non-overlap margin between different-class boxes,
\begin{equation}
\mathrm{MS}
=
\min_{\mathcal{C}(j)\neq \mathcal{C}(k)}
\Big[\,1-\mathrm{Ov}(j,k)\,\Big].
\label{eq:min-sep}
\end{equation}
These quantities depend only on the current category set $\{\bm{w}_j\}$ and can therefore be maintained online throughout training.

\subsection{Streaming Training Protocols, Failure Modes, and Targeted Interventions}
We revisit the training-protocol taxonomy of Section~\ref{sec:threat} from a
geometric and reliability perspective. We compare replay-free adversarial
training along two axes: \emph{when} adversarial examples are generated
(offline versus online) and \emph{which} generated examples are used for
updating (standard versus selective). Offline variants use perturbations
crafted against a fixed clean-trained reference model, whereas online variants
regenerate perturbations against the current streamed model state. Standard variants update on all generated adversarial examples, whereas selective variants update only on adversarial examples that induce misclassification, that is, those satisfying $f(\bm{x}_{\mathrm{adv}})\neq y$. Within standard updates, the absorb-vs-create decision can additionally be gated by predicted hyperbox overlap (Algorithm~\ref{alg:sep-aware}), without changing whether a sample triggers an update. Algorithm~\ref{alg:training-protocols} summarizes these protocol choices.

These protocol differences are consequential because adversarial updates in
ARTMAP can alter category structure rather than merely adjust a smooth
boundary. Depending on when perturbations are generated and which samples are
admitted for updating, adversarial training may trigger mismatch reset,
new-category formation, or cross-class encroachment. Consequently, robustness
in ARTMAP depends not only on attack strength, but also on how adversarial
samples are scheduled and filtered during streaming adaptation.

A practical consequence is category proliferation. Adversarial training can
substantially increase the number of categories, often by roughly a factor of
two, thereby increasing memory footprint and evaluation cost. When an
adversarial example $\bm{x}_{\mathrm{adv}}$ falls outside existing hyperboxes, learning may
create a new category instead of absorbing the sample into an existing one.
Selective updating mitigates this unnecessary growth by concentrating updates on
adversarial examples that reveal genuine model failures.

To improve stability across perturbation strengths, we employ progressive
two-stage selective training. Stage~1 samples $\epsilon\in[0.05,0.15]$, and
Stage~2 extends to $\epsilon\in[0.15,0.35]$, using online selective filtering
in both stages. This schedule separates adaptation to moderate perturbations
from later adaptation to stronger attacks, reducing the instability that can
arise when large-$\epsilon$ perturbations are introduced too early and
providing a more controlled path for category evolution.

\begin{algorithm}[t]
\caption{Adversarial Training Protocols}
\label{alg:training-protocols}
\small
\begin{algorithmic}[1]
\REQUIRE training stream $\{(\bm{x}_i,y_i)\}$, attack mode $\in\{\textsc{offline},\textsc{online}\}$, update rule $\in\{\textsc{standard},\textsc{selective}\}$
\IF{attack mode = \textsc{offline}}
    \STATE Train on all clean samples $\{(\bm{x}_i,y_i)\}$ \hfill $\triangleright$ Pass 1
    \FOR{each $(\bm{x}_i,y_i)$}
        \STATE Generate $\bm{x}_{\mathrm{adv},i}$ against the fixed clean-trained model
        \IF{update rule = \textsc{standard} \OR $f(\bm{x}_{\mathrm{adv},i})\neq y_i$}
            \STATE Train on $(\bm{x}_{\mathrm{adv},i},y_i)$
        \ENDIF
    \ENDFOR
\ELSE
    \FOR{each $(\bm{x}_i,y_i)$}
        \STATE Train on $(\bm{x}_i,y_i)$
        \STATE Generate $\bm{x}_{\mathrm{adv},i}$ against the current model
        \IF{update rule = \textsc{standard} \OR $f(\bm{x}_{\mathrm{adv},i})\neq y_i$}
            \STATE Train on $(\bm{x}_{\mathrm{adv},i},y_i)$
        \ENDIF
    \ENDFOR
\ENDIF
\end{algorithmic}
\end{algorithm}

These observations suggest the following failure-mode taxonomy for streaming
ARTMAP under adversarial training.

\noindent\textbf{Failure-mode taxonomy (streaming ARTMAP under adversarial training).}
We observe two recurring pathology classes that are detectable online from the
current category set:
\begin{itemize}[leftmargin=*,topsep=2pt,itemsep=2pt]
\item \textbf{Separation collapse:}
\emph{Detection}---minimum separation drops sharply while compactness remains
high (Table~\ref{tab:separation-collapse}).
\emph{Risk}---high-$\epsilon$ robustness degrades because cross-class overlap increases during adversarial updates to existing categories.
\emph{Mitigation}---an overlap-based separation-aware update rule (Algorithm~3) that checks predicted overlap before fast-learning absorption (Section~\ref{sec:results} reports its empirical behavior).

\item \textbf{Match-score inversion:}
\emph{Detection}---after selective adversarial training, adversarial samples
can attain higher match than clean samples in some regimes.
\emph{Risk}---match-threshold rejection calibrated on vanilla models may fail
or invert.
\emph{Mitigation}---calibrate rejection jointly with the chosen training
protocol; do not transfer thresholds across variants.
\end{itemize}

The second failure mode is semantic rather than purely geometric: even when
match remains locally well-defined, its interpretation as a reliability signal
can break after selective adversarial training.

\begin{lemma}[Match Preservation under Selective Absorption]
\label{lem:match-preservation}
Consider selective adversarial training for streaming Fuzzy ARTMAP, and let an
adversarial sample $\bm{x}_{\mathrm{adv}}$ be absorbed into an existing
correct-class category $j$ under fast learning. Then the category-level match
of that same sample to category $j$ is preserved after the update. Consequently,
selective absorption does not reduce the absorbed-category match of updated
adversarial samples, even though correctness is determined by global category
competition and therefore need not remain monotonic in match.
\end{lemma}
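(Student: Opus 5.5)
The plan is a direct computation with the fast-learning rule \eqref{eq:fuzzyart-fast-learning} and the match function \eqref{eq:match}, using only idempotence and associativity of the element-wise minimum. Write $\bm{I}=\bm{I}(\bm{x}_{\mathrm{adv}})$ for the complement-coded adversarial input and $\bm{w}_j^{\text{old}}$ for the weight of the accepted correct-class category $j$. Absorption under fast learning gives $\bm{w}_j^{\text{new}}=\bm{I}\wedge\bm{w}_j^{\text{old}}$.

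First I will compute the post-update intersection:
\begin{equation*}
\bm{I}\wedge\bm{w}_j^{\text{new}}=\bm{I}\wedge(\bm{I}\wedge\bm{w}_j^{\text{old}})=(\bm{I}\wedge\bm{I})\wedge\bm{w}_j^{\text{old}}=\bm{I}\wedge\bm{w}_j^{\text{old}}.
\end{equation*}
Taking $L^1$ norms and dividing by $|\bm{I}|=d$ from \eqref{eq:complement-norm} gives $M_j^{\text{new}}(\bm{I})=M_j^{\text{old}}(\bm{I})$. So the match is exactly preserved, and in particular does not decrease. Since absorption requires the vigilance test \eqref{eq:vigilance} to pass, this match also stays at least the (possibly match-tracked) $\rho_a$ in force at acceptance.

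As an aside, I would note the same algebra gives $M_j^{\text{new}}(\bm{I})=|\bm{w}_j^{\text{new}}|/d$. This shows the post-update category contains $\bm{I}$ in the hyperbox sense, because $\bm{I}\wedge\bm{w}_j^{\text{new}}=\bm{w}_j^{\text{new}}$. If $\beta<1$ were allowed, I would instead check monotonicity: $\bm{w}^{\text{new}}=\beta(\bm{I}\wedge\bm{w}^{\text{old}})+(1-\beta)\bm{w}^{\text{old}}\ge \bm{I}\wedge\bm{w}^{\text{old}}$ componentwise, so the intersection with $\bm{I}$ is unchanged. The statement, however, only needs $\beta=1$.

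For the ``consequently'' clause, I will argue by separating the two quantities the statement distinguishes. The prediction of the final model depends on $\arg\max_k T_k(\bm{I})$ over all categories, via \eqref{eq:winner}, not on $M_j$ alone. Two things can happen. Later updates to $j$ can shrink $\bm{w}_j$, which changes $T_j$ through its denominator $\alpha+|\bm{w}_j|$. Other categories, including wrong-class ones created later in the stream, can also attain larger $T_k$. Either effect can change the winner while the match to $j$ is unchanged, so correctness need not be monotone in match.

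I expect no analytic obstacle here, since the core is a one-line identity. The only care needed is scoping: the preservation holds immediately after the update of $j$ on this sample. To say anything about the final model, I would add the remark that any later fast-learning update of $j$ by an input $\bm{I}'$ yields $\bm{I}\wedge(\bm{I}'\wedge\bm{w}_j)$, which can only lower $M_j(\bm{I})$. I would therefore state the lemma as an immediate post-update property and not overclaim it for later times.
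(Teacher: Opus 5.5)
Your proof is correct and follows the paper's argument: the paper's Step~2 in Appendix~A uses the same identity $\bm{I}\wedge(\bm{w}_j\wedge\bm{I})=\bm{I}\wedge\bm{w}_j$ and then divides by $|\bm{I}|$ to obtain $M_j^{\mathrm{new}}=M_j^{\mathrm{old}}$. Your additions are all correct and go slightly beyond the paper's treatment: the $\beta<1$ extension, the hyperbox-containment remark, and the caveat that later fast-learning updates of $j$ can only lower $M_j(\bm{I})$, which makes this an immediate post-update property rather than a post-training one.
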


Lemma~\ref{lem:match-preservation} isolates the local effect of selective
absorption: for an adversarial sample that is actually used for updating, fast
learning preserves its match to the absorbed category rather than decreasing it.
This local preservation does not by itself imply correct prediction, because
final decisions depend on winner-take-all competition across all categories and
classes. More broadly, selective updates, category creation, and competition
shifts can reshape the post-training match statistics of adversarial inputs,
which motivates the stronger non-monotonicity statement in
Proposition~\ref{prop:match-nonmonotone}.

\begin{proposition}[Non-monotonicity of Match as a Correctness Proxy after Selective Training]
\label{prop:match-nonmonotone}
Consider a streaming Fuzzy ARTMAP trained without replay using selective
adversarial training, where only misclassified adversarial samples
$(\bm{x}_{\mathrm{adv}},y)$ satisfying $f(\bm{x}_{\mathrm{adv}})\neq y$
trigger updates. Let $M_j(\bm{I}(\bm{x}))$ denote the post-training
category-level match score of input $\bm{x}$ with respect to category $j$.
Then, after selective adversarial training, match need not be a monotone
indicator of correctness; in particular, there exist labeled inputs
$(\bm{x}_1,y_1)$ and $(\bm{x}_2,y_2)$ and corresponding categories $j_1$ and
$j_2$ such that
\begin{equation}
\begin{aligned}
M_{j_1}(\bm{I}(\bm{x}_1)) &> M_{j_2}(\bm{I}(\bm{x}_2)), \\
f(\bm{x}_1) &\neq y_1, \\
f(\bm{x}_2) &= y_2.
\end{aligned}
\label{eq:supp-prop-nonmonotone}
\end{equation}
Thus, a larger post-training match score does not necessarily imply a higher
likelihood of correct classification. Consequently, match-threshold rejection
rules calibrated on vanilla models may fail after selective adversarial
training.
\end{proposition}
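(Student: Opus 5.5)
The plan is to prove the statement constructively. Since Proposition~\ref{prop:match-nonmonotone} only asserts existence, it suffices to exhibit one streaming Fuzzy ARTMAP run, trained under the selective protocol of Algorithm~\ref{alg:training-protocols}, whose final model $f_T$ admits two labeled inputs satisfying \eqref{eq:supp-prop-nonmonotone}. I would work in dimension $d=1$, where every quantity is explicit.

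\emph{Step 1: explicit formulas in $d=1$.} Write a category as $\bm{w}_j=[a_j,\,1-b_j]$, which encodes the interval $[a_j,b_j]$. Then $|\bm{I}(x)|=1$, so
\[
M_j(\bm{I}(x))=\min(x,a_j)+\min(1-x,1-b_j)=1-\mathrm{dist}(x,[a_j,b_j]),
\]
and
\[
T_j=\frac{M_j}{\alpha+1-(b_j-a_j)}.
\]
Thus a point inside a box has match exactly $1$. The key observation is that the winner is decided by $T_j$, not by $M_j$, so a point can have full match with a category and still have that category win for the wrong label.

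\emph{Step 2: build the final model.} Take $\alpha$ small (say $\alpha=0.01$) and vigilance around $\rho_a=0.85$. Present the clean stream $(0,A)$, $(0.1,A)$, $(0.6,B)$. By fast learning \eqref{eq:fuzzyart-fast-learning}, the first two samples merge into one class-$A$ box $[0,0.1]$; I will check that the match $0.9\ge\rho_a$ permits this merge. The third sample creates a point category $[0.6,0.6]$ for class $B$, since its match with the $A$ box is $0.5<\rho_a$ (and a map-field mismatch would force a new category anyway).

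For the selective adversarial steps, I must verify one of two things for each generated $x_{\mathrm{adv}}$ within the budget. Either it is correctly classified, so that under the selective rule it triggers no update. Or, if it is misclassified and absorbed, its effect on the boxes is traced exactly using Lemma~\ref{lem:match-preservation} and the $\wedge$ update. The simplest route is to choose $\epsilon$ small, such as $\epsilon=0.05$, so that every perturbation of $0,0.1,0.6$ remains correctly classified. The final model then has exactly the two boxes above.

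\emph{Step 3: exhibit the pair.} For the first input, let $x_1=0.05$ with label $y_1=B$. This can be read as an adversarial or test-time input of true class $B$ lying in the $A$ region. It lies inside the $A$ box, so $M_{j_1}=1$ with $j_1$ the $A$ category. Moreover $T_A=1/(\alpha+0.9)$ beats $T_B$, so $f(x_1)=A\neq y_1$.

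For the second input, let $x_2=0.61$ with label $y_2=B$. Its match to the $B$ category $j_2$ is $M_{j_2}=0.99<1$. Its choice values are $T_B=0.99/(\alpha+1)\approx0.98$ and $T_A=0.49/(\alpha+0.9)\approx0.54$, so $T_B>T_A$. Vigilance also passes, since $0.99\ge\rho_a$. Hence $f(x_2)=y_2$, and \eqref{eq:supp-prop-nonmonotone} holds. The consequence for match-threshold rejection is then immediate: any threshold in $(0.99,1]$ accepts the error $x_1$ and rejects the correct prediction $x_2$.

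\emph{Expected obstacle.} The delicate step is Step~2: making the construction genuinely an outcome of the selective streaming protocol, rather than an arbitrary weight configuration. Two difficulties arise. First, the choice term favors large boxes, so the box sizes, $\alpha$, $\rho_a$ and $\epsilon$ must be tuned so that the merges, the category creations, and the "no update on correctly classified adversarial samples" all occur as claimed. Second, an adversarial sample must not accidentally absorb into and enlarge the $A$ box toward $x_2$. If the small-$\epsilon$ route proves too restrictive, my fallback is to allow one misclassified adversarial sample to be absorbed. I would then recompute the enlarged box explicitly and, if necessary, reposition $x_1$ and $x_2$ relative to the enlarged box.
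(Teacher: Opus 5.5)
Your Step~1 formula for the match is incorrect, and the error lands on exactly the inequality the proposition asserts. For $\bm{w}_j=[a_j,\,1-b_j]$ and $\bm{I}(x)=[x,\,1-x]$,
\[
M_j(\bm{I}(x))=\min(x,a_j)+\min(1-x,1-b_j)=1-\bigl(\max(x,b_j)-\min(x,a_j)\bigr)=1-(b_j-a_j)-\mathrm{dist}(x,[a_j,b_j]).
\]
So the match is one minus the length of the category interval enlarged to contain $x$. A point inside a non-degenerate box therefore has match $|\bm{w}_j|/|\bm{I}|=1-(b_j-a_j)<1$, not $1$.

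In your final model, $x_1=0.05$ lies in the $A$ box $[0,0.1]$, so $M_{j_1}(\bm{I}(x_1))=0.9$. By contrast, $M_{j_2}(\bm{I}(0.61))=0.99$; your value is right there only because the $B$ box is a single point. Thus $M_{j_1}(\bm{I}(x_1))<M_{j_2}(\bm{I}(x_2))$, and the first line of \eqref{eq:supp-prop-nonmonotone} fails for your pair. Likewise, a threshold in $(0.99,1]$ rejects both inputs rather than accepting the error.

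The same slip reverses your remark that the choice term favors large boxes. For an interior point, $T_j=|\bm{w}_j|/(\alpha+|\bm{w}_j|)$ is increasing in $|\bm{w}_j|=1-(b_j-a_j)$. So among boxes containing $x$ the smallest one wins, and match and choice rank such boxes identically. Match exactly $1$ occurs only for a point box located at $x$. Your other slips do not change any decision: the match of $0.6$ to the $A$ box is $0.4$, not $0.5$, and $T_A$ at $0.61$ is about $0.43$, not $0.54$.

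The construction is easy to repair, because your Step~2 does go through. With $\epsilon=0.05$, every point of the balls around $0$, $0.1$ and $0.6$ is correctly classified. The worst cases are $0.15$, where $T_A=0.85/0.91>T_B=0.55/1.01$, and $0.55$, where $T_B=0.95/1.01>T_A=0.45/0.91$. Hence neither offline nor online selective training performs an adversarial update, and the final model is exactly your two categories.

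It then suffices to move $x_2$ farther from the $B$ prototype, e.g.\ $x_2=0.72$. There $M_B=0.88\ge\rho_a$ and $T_B=0.88/1.01\approx0.87>T_A=0.28/0.91\approx0.31$, so $f(x_2)=B$, while $M_A(\bm{I}(x_1))=0.9>0.88$. Any threshold in $(0.88,0.9]$ then accepts the error and rejects the correct prediction. Alternatively, make the class-$A$ category a point box at $x_1$.

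Once repaired, your argument is a fully explicit witness generated by the selective protocol itself, which is more than the paper provides. The appendix proof combines match preservation under absorption, match $1$ for newly created categories, and the observation that winner-take-all competition rather than match determines $f$. It then asserts that the required regimes can exist without exhibiting a concrete configuration.
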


A constructive proof is provided in Appendix A.

Proposition~\ref{prop:match-nonmonotone} explains why a fixed match-threshold
detector that performs well on vanilla models can fail after selective
adversarial training: the training process can reshape adversarial match
statistics without preserving a monotone relationship between match and
correctness. Consequently, thresholds calibrated on vanilla models should not
be transferred across training protocols without validation, since AUC can drop
sharply and may even fall below chance. Importantly, the proposition does not
assert that selective absorption strictly increases match in the
absorbed-category case. Instead, it makes the more conservative claim that,
after selective adversarial training, post-training match need not remain a
monotone proxy for correctness. This result matters beyond adversarial evaluation: in online deployment, any rejection, abstention, or human-escalation mechanism that relies on match as a trust signal must be revalidated after adversarial training. In other words, the issue is not only robustness of prediction, but also robustness of the internal confidence proxy used for downstream decision support.

\emph{Separation-aware training rule.}
Motivated by separation collapse, we propose separation-aware training, which modifies the \emph{form} of the adversarial update (absorption vs.\ new-category creation), not whether one occurs. For each adversarial example $\bm{x}_{\mathrm{adv}}$, we (i) identify the best-matching correct-class category $j$, (ii) simulate the post-update hyperbox
$\bm{w}_j^{\mathrm{new}}=\bm{w}_j \wedge \bm{I}(\bm{x}_{\mathrm{adv}})$, and (iii) evaluate its overlap with wrong-class categories. If the predicted overlap exceeds a threshold $\theta$, we create a new category instead of absorbing $\bm{x}_{\mathrm{adv}}$ into the existing one, that is, instead of updating the accepted category under fast learning. Overlap is computed as the normalized $L^1$ intersection of hyperbox bounds, directly penalizing expansions that intrude into wrong-class regions. This rule preserves class separation while still incorporating informative adversarial samples.

\begin{algorithm}[t]
\caption{Separation-Aware Update Rule for Adversarial Samples}
\label{alg:sep-aware}
\small
\begin{algorithmic}[1]
\REQUIRE adversarial sample $\bm{x}_{\mathrm{adv}}$, ground-truth label $y$, current categories $\{\bm{w}_j\}$, class mapping $\mathcal{C}(j)$, threshold $\theta$
\STATE $\bm{I}\leftarrow[\bm{x}_{\mathrm{adv}};\,\bm{1}-\bm{x}_{\mathrm{adv}}]$
\STATE $j^\star \leftarrow \arg\max_{j:\mathcal{C}(j)=y} T_j(\bm{I})$ \hfill $\triangleright$ highest-choice correct-class category
\STATE $\bm{w}^{\mathrm{new}} \leftarrow \bm{w}_{j^\star}\wedge \bm{I}$ \hfill $\triangleright$ simulated fast-learning update
\STATE $\Delta \leftarrow \max_{k:\mathcal{C}(k)\neq y}\; \mathrm{Ov}\big((\bm{\ell}^{\mathrm{new}},\bm{u}^{\mathrm{new}}),(\bm{\ell}_k,\bm{u}_k)\big)$
\STATE \hspace{1.45em} where $\bm{\ell}^{\mathrm{new}}=\bm{w}^{\mathrm{new}},\ \bm{u}^{\mathrm{new}}=\bm{1},\ \bm{\ell}_k=\bm{w}_k,\ \bm{u}_k=\bm{1}$
\IF{$\Delta > \theta$}
  \STATE Create a new category for class $y$ with $\bm{w}\leftarrow \bm{I}$
\ELSE
  \STATE Update the existing category: $\bm{w}_{j^\star}\leftarrow \bm{w}^{\mathrm{new}}$
\ENDIF
\end{algorithmic}
\end{algorithm}

Match-score statistics highlight a caveat for rejection-based defenses. On
vanilla models, adversarial examples tend to produce lower match scores than
clean samples, suggesting that a simple threshold on match could reject many
adversarial inputs. However, after selective adversarial training this
relationship can reverse: adversarial examples can attain higher match scores
than clean samples, and the corresponding match-threshold detector can
degrade below chance. Section~\ref{sec:results} reports the empirical AUC
collapse on a representative dataset. Practically, rejection thresholds must
be calibrated for the specific trained variant and validated jointly with the
chosen training protocol, rather than transferred from a vanilla
model~\cite{chow1970reject,geifman2017selective,geifman2019selectivenet,hendrycks2017baseline}.
We compute this AUC by using the match score as a scalar detector score and
sweeping a single threshold. This reversal is consistent with selective
training preserving absorbed-category match while reshaping post-training
attained-match statistics through selective updates, category creation, and
winner competition. More broadly, it shows that interpretable internal scores
in streaming models should not be assumed to remain trustworthy after
adversarial training merely because they were reliable in the vanilla regime.
\section{Experimental Setup}
\label{sec:setup}

We evaluate on four image-classification benchmarks: USPS (7{,}291 train / 2{,}007 test, $16\times16$), MNIST (60{,}000 / 10{,}000, $28\times28$), Fashion-MNIST (60{,}000 / 10{,}000, $28\times28$), and EMNIST-Letters (124{,}800 / 20{,}800, 26 classes, $28\times28$). These benchmarks are not intended to define a large-scale vision leaderboard; rather, they serve as controlled streaming testbeds in which ARTMAP category growth, final-model attack alignment, and replay-free adversarial-training protocols can be compared systematically across different input dimensions and class structures. All images are normalized to $[0,1]$ and flattened before training. Unless otherwise noted, we use Fuzzy ARTMAP with $\alpha=10^{-3}$, $\beta=1.0$, and $\rho_{ab}=1.0$.

For vanilla models, we sweep the input-module vigilance parameter $\rho_a\in\{0.0,0.1,\ldots,0.9\}$ to characterize the clean-accuracy/robustness tradeoff. Figure~1 reports representative vulnerability results for $\rho_a=0.5$--$0.9$, since lower-vigilance settings yield poor clean accuracy and are not competitive in practice. All defense methods are evaluated at $\rho_a=0.9$, which provides the strongest overall robustness in our setting.

For white-box attacks, we use WB-Softmax with temperature $\tau = 0.01$ and perturbation budgets
$\epsilon \in \{0.05, 0.10, \ldots, 0.35\}$.
We verified the WB-Softmax hyperparameters through attack-strength ablations: $\tau$ was swept over the range $[0.005,0.1]$ with evaluation at $\{0.005,0.01,0.02,0.05,0.10\}$, and PGD steps were varied over $\{1,5,10,20\}$. Across datasets, $\tau=0.01$ and PGD-20 consistently produced the strongest and most stable attacks, and we use this configuration throughout the main evaluation.

For black-box transfer robustness, we use two surrogate families. The first is a SimpleCNN trained separately on each dataset. It contains two convolutional blocks, each composed of convolution, nonlinear activation, and spatial downsampling, followed by a fully connected classifier that outputs class logits. The second is an LRS-regularized surrogate~\cite{wu2024lrs}, trained for 10 epochs using Adam ($\mathrm{lr}=10^{-3}$) with gradient-norm regularization $\lambda=2500$. We selected $\lambda=2500$ from a brief empirical sweep $\lambda\in\{500,1000,2000,2500\}$ and observed that transfer-attack strength, measured by AURAC, is relatively insensitive within this range.

We compare seven training variants (Table~\ref{tab:main-results}). The first three are non-selective (every adversarial triggers an update); the latter four are selective (only misclassifying adversarials trigger updates) or progressive selective:
\begin{itemize}
    \item \textbf{Vanilla}: no-defense baseline.
    \item \textbf{AdvTrain (off)}: offline adversarial training that updates on all generated adversarial examples.
    \item \textbf{AdvTrain (on)}: online adversarial training that regenerates adversarial examples during streaming updates and updates on all of them.
    \item \textbf{Sep-Aware}: online adversarial training (every adversarial triggers an update) where an overlap-gated decision rule chooses between fast-learning absorption into the best-matching correct-class category and creation of a new category (Algorithm~3, threshold $\theta$).
    \item \textbf{Selective (off)}: offline adversarial training with updates restricted to adversarial examples that induce misclassification.
    \item \textbf{Selective (on)}: online adversarial training with the same selective-update rule.
    \item \textbf{Two-Stage Sel.}: progressive two-stage selective training with moderate-$\epsilon$ adaptation in Stage~1 and stronger-$\epsilon$ adaptation in Stage~2.
\end{itemize}

For online variants, adversarial examples are generated on-the-fly with WB-Softmax and discarded immediately after use, preserving the strict single-pass streaming regime. Separation-aware training is motivated by the separation-collapse pattern discussed in Section~\ref{sec:defense}; Tables~\ref{tab:main-results} and~\ref{tab:bb-results} report its cross-dataset behavior, and Section~\ref{sec:results} reports the threshold ablation.

To keep adaptive PGD-20 evaluation tractable for large-category models (up to 228K categories), we evaluate adversarial accuracy on a fixed subset of $n=1000$ test samples that are correctly classified at $\epsilon=0$. Unless otherwise noted, the reported robust-accuracy curves and AURAC values are therefore conditional on correct clean prediction.
In figures, we abbreviate this as ``Cond.\ robust accuracy''. This conditional evaluation avoids conflating adversarial failure with clean misclassification and supports consistent comparison across defense methods with different clean accuracies. Clean accuracy is reported separately in all main defense tables so that robustness--utility tradeoffs remain visible.

White-box robustness is evaluated under WB-Softmax PGD, our strongest differentiable attack. Black-box robustness is evaluated primarily under CNN-transfer PGD and LRS-transfer PGD; Square attack~\cite{andriushchenko2020square} is used as an additional black-box baseline for vanilla models (Table~\ref{tab:attack-strength}). All adversarial training methods also use WB-Softmax attacks during training so that the defenses are evaluated against the strongest available adaptive white-box attacker.

Code will be released upon publication. Additional theoretical details, including the constructive proof of Proposition~1 and auxiliary robustness summaries, are provided in Appendix A and Appendix B.

\section{Results and Discussion}
\label{sec:results}

This section reports four main findings. First, vanilla Fuzzy ARTMAP is highly vulnerable under an adaptive white-box threat once the attack objective is aligned with ARTMAP's native decision rule. Second, defense rankings depend strongly on evaluation protocol: offline adversarial training can appear strong under transfer attacks yet collapse under adaptive white-box evaluation, directly supporting Principle~1. Third, among replay-free defenses, progressive two-stage selective training provides the strongest overall robustness across datasets. Separation-aware training, the geometric intervention motivated by separation collapse, matches standard online adversarial training in robustness; the overlap constraint does not fire in the recommended operating regime, and we report its behavior across $\theta$ later in this section. Fourth, post-training match statistics can become semantically unreliable after selective adversarial training, indicating that rejection rules calibrated on vanilla models should not be transferred across training protocols without revalidation.

\subsection{Baseline Vulnerability}

Figure~\ref{fig:vulnerability} characterizes the vulnerability of vanilla Fuzzy ARTMAP across vigilance levels under both adaptive white-box WB-Softmax PGD and black-box transfer PGD. Across all four datasets, WB-Softmax is consistently stronger than black-box transfer, showing that once the attack objective is aligned with ARTMAP's category competition and map-field structure, white-box optimization yields substantial attack success. This result is methodologically important: it confirms that ART models are not inherently protected by non-smooth winner-take-all structure, and that adaptive white-box evaluation is both feasible and necessary.

Higher vigilance improves robustness under both attack types, consistent with the interpretation that finer category structure can reduce vulnerability to broader perturbation regions. At the same time, the gap between WB-Softmax and transfer attacks is especially informative: black-box transfer alone would materially underestimate the vulnerability of the deployed ARTMAP model. This observation motivates the remainder of the section, where all defenses are compared under the stronger adaptive white-box protocol.

\begin{figure}[t]
\centering
\includegraphics[width=0.99\textwidth]{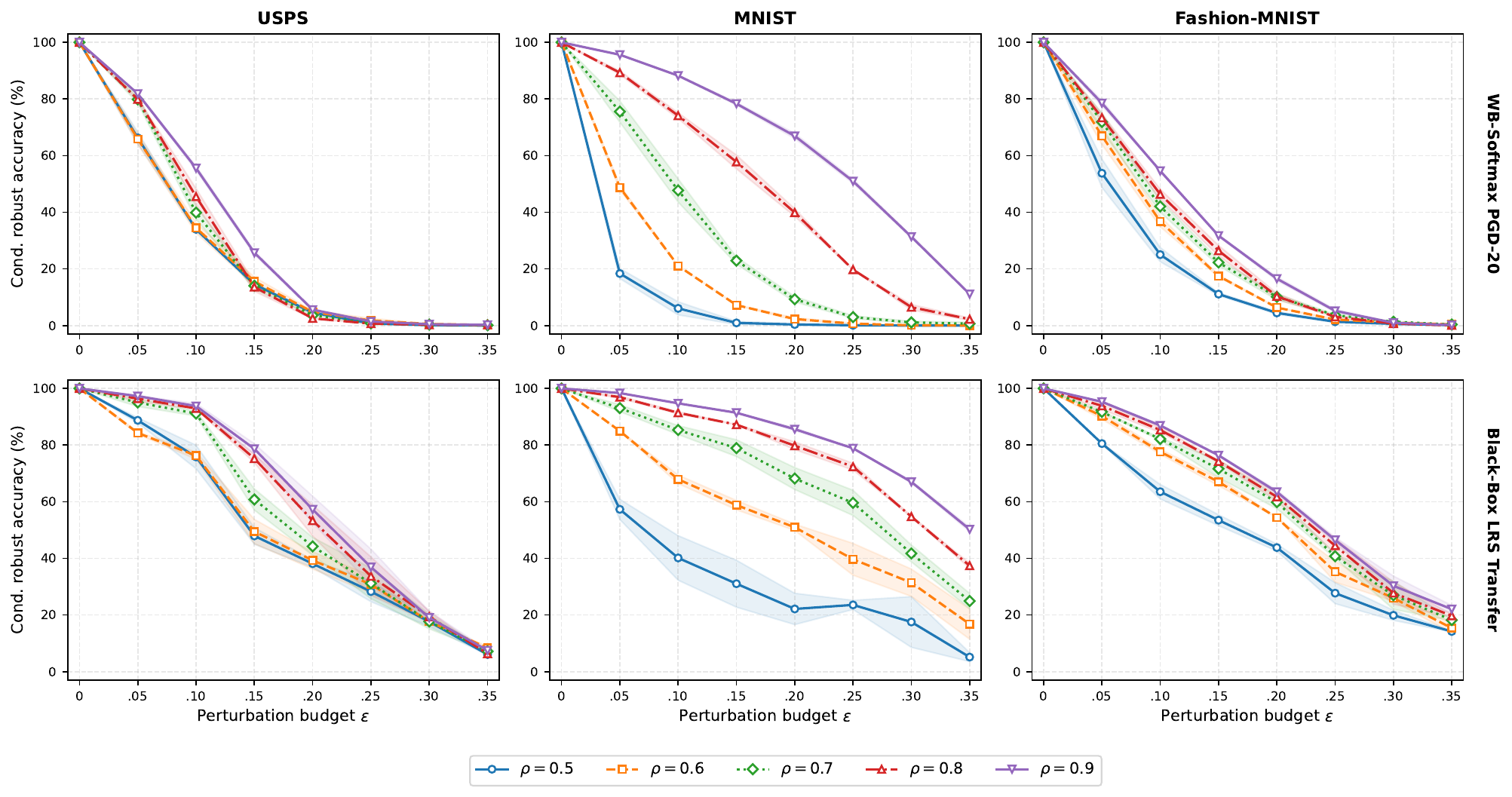}
\caption{Baseline vulnerability of vanilla Fuzzy ARTMAP ($\epsilon \leq 0.35$). \textbf{Top}: White-box Softmax PGD (WB-Softmax) attack with temperature $\tau=0.01$. \textbf{Bottom}: Black-box transfer PGD attack. The y-axis label ``Cond. robust accuracy'' denotes conditional robust accuracy, i.e., accuracy measured on the clean-correct subset.}
\label{fig:vulnerability}
\end{figure}

\subsection{Defense Comparison Under WB-Softmax}

Table~\ref{tab:main-results} compares all defenses at $\rho_a=0.9$ under the adaptive white-box WB-Softmax threat model, while Figure~\ref{fig:defense-comparison} shows representative accuracy--$\epsilon$ curves on Fashion-MNIST. We report clean accuracy, adversarial accuracy at $\epsilon=0.30$, AURAC, and category count. Separation-aware training is included for completeness across datasets, while Table~\ref{tab:usps-case-study} highlights its targeted high-$\epsilon$ effect on USPS.

Tables~\ref{tab:main-results} and~\ref{tab:bb-results} should be read jointly. Table~\ref{tab:main-results} reports robustness under the adaptive white-box WB-Softmax threat model, whereas Table~\ref{tab:bb-results} reports transfer-based black-box robustness. The resulting defense rankings differ sharply, and that reversal is itself one of the main findings of the paper.

\begin{table}[t]
\centering
\caption{Defense Comparison under WB-Softmax PGD-20 ($\rho = 0.9$). Mean$\pm$std over 3 seeds}
\label{tab:main-results}
\setlength{\tabcolsep}{2pt}
\footnotesize
\begin{tabular}{@{}l *{8}{c}@{}}
\toprule
& \multicolumn{4}{c}{USPS} & \multicolumn{4}{c}{MNIST} \\
\cmidrule(lr){2-5} \cmidrule(lr){6-9}
Method & \begin{tabular}[c]{@{}c@{}}Clean\\Acc.\end{tabular} & \begin{tabular}[c]{@{}c@{}}Acc$_{\mathrm{adv}}$\\(0.30)\end{tabular} & AURAC & \begin{tabular}[c]{@{}c@{}}\#\\Cat.\end{tabular} & \begin{tabular}[c]{@{}c@{}}Clean\\Acc.\end{tabular} & \begin{tabular}[c]{@{}c@{}}Acc$_{\mathrm{adv}}$\\(0.30)\end{tabular} & AURAC & \begin{tabular}[c]{@{}c@{}}\#\\Cat.\end{tabular} \\
\midrule
Vanilla & 92.3$\pm$0.6 & 0.5$\pm$0.1 & 21.7$\pm$0.2 & 6K & 94.9$\pm$0.2 & 31.7$\pm$0.6 & 61.5$\pm$0.1 & 56K \\
\midrule
AdvTrain (off) & 92.4$\pm$0.1 & 0.3$\pm$0.1 & 12.6$\pm$0.4 & 12K & 94.5$\pm$0.1 & 0.0$\pm$0.0 & 14.8$\pm$0.2 & 103K \\
AdvTrain (on) & 92.9$\pm$0.4 & \textbf{10.6$\pm$1.2} & 25.9$\pm$0.6 & 13K & 94.1$\pm$0.1 & 21.1$\pm$4.4 & 51.2$\pm$0.6 & 111K \\
\midrule
Selective (off) & 92.2$\pm$0.0 & 3.0$\pm$0.9 & 25.4$\pm$0.7 & 11K & 94.5$\pm$0.0 & 26.3$\pm$0.6 & 53.6$\pm$0.1 & 94K \\
Selective (on) & 92.3$\pm$0.8 & 2.6$\pm$0.9 & 24.7$\pm$0.4 & 12K & 94.9$\pm$0.2 & 26.7$\pm$1.1 & 53.0$\pm$0.6 & 95K \\
\midrule
Sep-Aware & 92.8$\pm$0.4 & 9.7$\pm$1.7 & 26.2$\pm$0.9 & 13K & 94.2$\pm$0.1 & 25.1$\pm$2.3 & 51.9$\pm$0.2 & 111K \\
\midrule
Two-Stage Sel. & 92.1$\pm$0.1 & 5.4$\pm$0.6 & \textbf{28.2$\pm$0.2} & 14K & 94.5$\pm$0.1 & \textbf{45.8$\pm$1.6} & \textbf{64.5$\pm$0.7} & 89K \\
\midrule
& \multicolumn{4}{c}{Fashion-MNIST} & \multicolumn{4}{c}{EMNIST-Letters} \\
\cmidrule(lr){2-5} \cmidrule(lr){6-9}
Method & \begin{tabular}[c]{@{}c@{}}Clean\\Acc.\end{tabular} & \begin{tabular}[c]{@{}c@{}}Acc$_{\mathrm{adv}}$\\(0.30)\end{tabular} & AURAC & \begin{tabular}[c]{@{}c@{}}\#\\Cat.\end{tabular} & \begin{tabular}[c]{@{}c@{}}Clean\\Acc.\end{tabular} & \begin{tabular}[c]{@{}c@{}}Acc$_{\mathrm{adv}}$\\(0.30)\end{tabular} & AURAC & \begin{tabular}[c]{@{}c@{}}\#\\Cat.\end{tabular} \\
\midrule
Vanilla & 80.7$\pm$0.5 & 0.9$\pm$0.2 & 24.4$\pm$0.6 & 57K & 80.8$\pm$0.6 & 12.2$\pm$0.1 & 37.1$\pm$1.0 & 114K \\
\midrule
AdvTrain (off) & 82.2$\pm$0.2 & 1.1$\pm$0.2 & 14.9$\pm$0.2 & 109K & 81.4$\pm$0.2 & 0.0$\pm$0.1 & 13.0$\pm$0.6 & 212K \\
AdvTrain (on) & 82.8$\pm$1.0 & 20.7$\pm$0.5 & 34.0$\pm$0.2 & 111K & 80.4$\pm$0.7 & 5.8$\pm$1.2 & 26.9$\pm$0.3 & 226K \\
\midrule
Selective (off) & 82.4$\pm$0.1 & 15.3$\pm$1.2 & 35.1$\pm$0.5 & 101K & 80.7$\pm$0.0 & 7.8$\pm$2.0 & 27.2$\pm$0.9 & 209K \\
Selective (on) & 81.1$\pm$0.4 & 17.0$\pm$0.8 & 34.8$\pm$0.1 & 101K & 80.7$\pm$0.7 & 6.4$\pm$0.7 & 25.8$\pm$0.6 & 210K \\
\midrule
Sep-Aware & 82.9$\pm$0.9 & 20.5$\pm$1.0 & 34.7$\pm$0.5 & 111K & 80.3$\pm$0.5 & 5.1$\pm$1.8 & 26.8$\pm$0.7 & 226K \\
\midrule
Two-Stage Sel. & 82.4$\pm$0.1 & \textbf{23.0$\pm$1.0} & \textbf{41.3$\pm$0.2} & 121K & 80.7$\pm$0.0 & \textbf{22.5$\pm$1.0} & \textbf{39.8$\pm$0.1} & 228K \\
\bottomrule
\end{tabular}
\vspace{0.3em}

\small{$\mathrm{Acc}_{\mathrm{adv}}(0.30)$ = adversarial accuracy (\%) at $\epsilon = 0.30$. AURAC (\%) = Area Under the Robust Accuracy Curve ($\epsilon \in [0.05, 0.35]$). \# Categories = category count (K = thousands). Bold = best per dataset.}
\end{table}

\begin{table}[t]
\centering
\caption{Defense Comparison under Black-Box Transfer PGD-20 ($\rho = 0.9$). Mean$\pm$std over 3 seeds}
\label{tab:bb-results}
\setlength{\tabcolsep}{2pt}
\footnotesize
\begin{tabular}{@{}l *{8}{c}@{}}
\toprule
& \multicolumn{4}{c}{USPS} & \multicolumn{4}{c}{MNIST} \\
\cmidrule(lr){2-5} \cmidrule(lr){6-9}
Method & \begin{tabular}[c]{@{}c@{}}Clean\\Acc.\end{tabular} & \begin{tabular}[c]{@{}c@{}}Acc$_{\mathrm{adv}}$\\(0.30)\end{tabular} & AURAC & \begin{tabular}[c]{@{}c@{}}\#\\Cat.\end{tabular} & \begin{tabular}[c]{@{}c@{}}Clean\\Acc.\end{tabular} & \begin{tabular}[c]{@{}c@{}}Acc$_{\mathrm{adv}}$\\(0.30)\end{tabular} & AURAC & \begin{tabular}[c]{@{}c@{}}\#\\Cat.\end{tabular} \\
\midrule
Vanilla & 92.2$\pm$0.1 & 9.1$\pm$0.3 & 48.4$\pm$0.2 & 6K & 94.5$\pm$0.1 & 85.8$\pm$0.4 & 92.1$\pm$0.2 & 56K \\
\midrule
AdvTrain (off) & 92.4$\pm$0.2 & \textbf{41.5$\pm$1.9} & \textbf{63.8$\pm$0.8} & 12K & 94.5$\pm$0.1 & 81.1$\pm$0.6 &
89.2$\pm$0.3 & 103K \\
AdvTrain (on) & 92.9$\pm$0.5 & 25.7$\pm$1.1 & 57.3$\pm$0.6 & 13K & 94.1$\pm$0.2 & 79.4$\pm$1.6 & 88.2$\pm$1.0 & 111K
\\
\midrule
Selective (off) & 92.1$\pm$0.1 & 29.2$\pm$0.4 & 58.3$\pm$0.2 & 11K & 94.5$\pm$0.0 & 88.7$\pm$1.2 & 93.3$\pm$0.2 & 94K
\\
Selective (on) & 92.5$\pm$0.2 & 24.0$\pm$0.5 & 56.3$\pm$0.3 & 12K & 94.5$\pm$0.1 & 89.2$\pm$0.3 & 93.5$\pm$0.2 & 95K
\\
\midrule
Sep-Aware & 92.8$\pm$0.5 & 25.8$\pm$0.3 & 57.8$\pm$0.3 & 13K & 94.2$\pm$0.2 & 78.0$\pm$1.9 & 87.5$\pm$0.7 & 111K \\
\midrule
Two-Stage Sel. & 92.1$\pm$0.1 & 30.0$\pm$0.7 & 59.4$\pm$0.2 & 14K & 94.5$\pm$0.0 & \textbf{89.4$\pm$0.4} &
\textbf{93.7$\pm$0.2} & 89K \\
\midrule
& \multicolumn{4}{c}{Fashion-MNIST} & \multicolumn{4}{c}{EMNIST-Letters} \\
\cmidrule(lr){2-5} \cmidrule(lr){6-9}
Method & \begin{tabular}[c]{@{}c@{}}Clean\\Acc.\end{tabular} & \begin{tabular}[c]{@{}c@{}}Acc$_{\mathrm{adv}}$\\(0.30)\end{tabular} & AURAC & \begin{tabular}[c]{@{}c@{}}\#\\Cat.\end{tabular} & \begin{tabular}[c]{@{}c@{}}Clean\\Acc.\end{tabular} & \begin{tabular}[c]{@{}c@{}}Acc$_{\mathrm{adv}}$\\(0.30)\end{tabular} & AURAC & \begin{tabular}[c]{@{}c@{}}\#\\Cat.\end{tabular} \\
\midrule
Vanilla & 80.8$\pm$0.2 & 67.8$\pm$0.5 & 79.8$\pm$0.3 & 57K & 80.7$\pm$0.0 & \textbf{89.1$\pm$1.0} &
\textbf{92.8$\pm$0.3} & 114K \\
\midrule
AdvTrain (off) & 82.2$\pm$0.3 & \textbf{72.5$\pm$0.2} & \textbf{82.1$\pm$0.1} & 109K & 81.4$\pm$0.2 & 70.1$\pm$2.0 &
80.5$\pm$0.6 & 212K \\
AdvTrain (on) & 82.8$\pm$1.2 & 68.6$\pm$1.6 & 79.9$\pm$0.9 & 111K & 80.4$\pm$0.7 & 70.0$\pm$1.5 & 79.5$\pm$0.5 & 226K
\\
\midrule
Selective (off) & 82.4$\pm$0.1 & 67.5$\pm$0.2 & 79.9$\pm$0.2 & 101K & 80.7$\pm$0.0 & 82.3$\pm$2.7 & 89.9$\pm$1.1 &
209K \\
Selective (on) & 81.7$\pm$0.2 & 68.5$\pm$0.4 & 80.4$\pm$0.2 & 101K & 80.3$\pm$0.1 & 83.7$\pm$1.4 & 90.4$\pm$0.4 & 210K
\\
\midrule
Sep-Aware & 82.9$\pm$1.2 & 68.3$\pm$2.2 & 79.7$\pm$1.1 & 111K & 80.3$\pm$0.5 & 72.8$\pm$1.2 & 80.8$\pm$1.0 & 226K \\
\midrule
Two-Stage Sel. & 82.4$\pm$0.2 & 69.0$\pm$0.4 & 80.5$\pm$0.2 & 121K & 80.7$\pm$0.0 & 85.5$\pm$1.1 & 90.7$\pm$0.8 & 228K
\\
\bottomrule
\end{tabular}
\vspace{0.3em}

\small{Transfer attacks use the SimpleCNN surrogate trained on each dataset. $\mathrm{Acc}_{\mathrm{adv}}(0.30)$ =
adversarial accuracy (\%) at $\epsilon = 0.30$. AURAC (\%) = Area Under the Robust Accuracy Curve ($\epsilon \in
[0.05, 0.35]$). \# Categories = category count (K = thousands). Bold = best per dataset.}
\end{table}

\FloatBarrier

\begin{figure}[h]
\centering
\includegraphics[width=0.99\columnwidth]{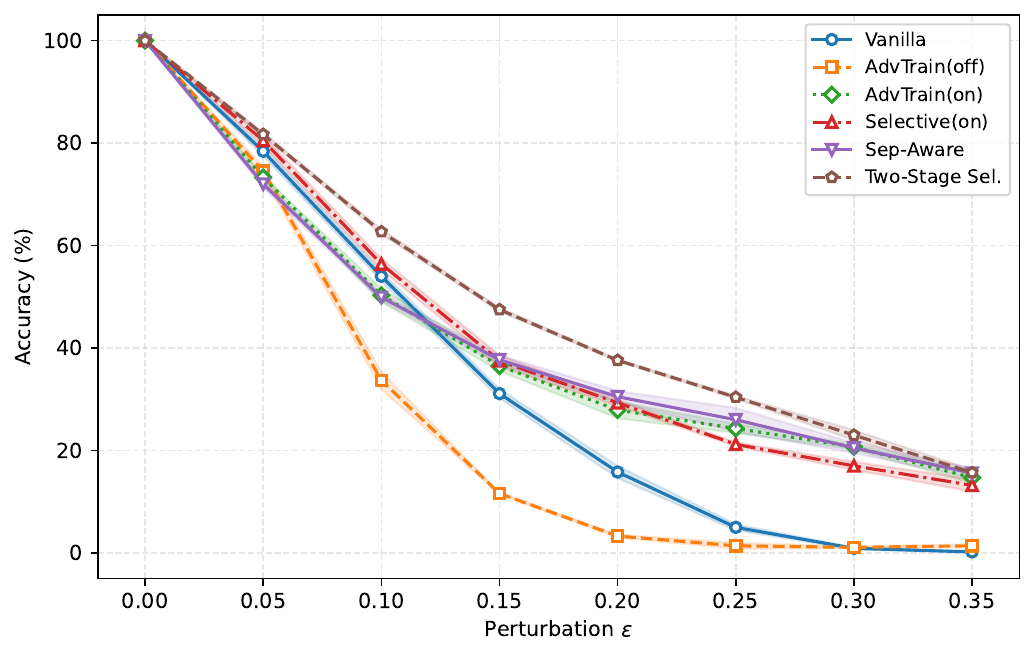}
\caption{Defense comparison on Fashion-MNIST under WB-Softmax PGD-20 ($\rho=0.9$). Two-stage selective training achieves the highest robust accuracy, while offline adversarial training collapses below vanilla. Mean over 3 seeds. For readability, the main text shows representative accuracy--$\epsilon$ curves for Fashion-MNIST; full tabular results for all datasets are reported in Tables~\ref{tab:main-results}--\ref{tab:bb-results}.}

\label{fig:defense-comparison}
\end{figure}

\begin{figure}[h]
\centering
\includegraphics[width=0.99\columnwidth]{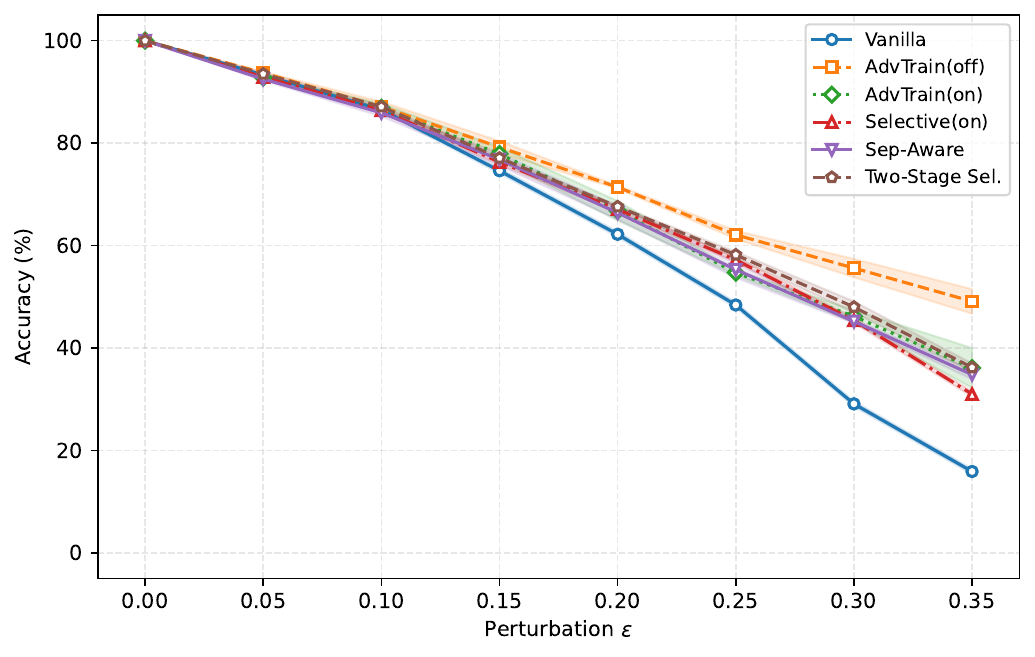}
\caption{Defense comparison on Fashion-MNIST under LRS Transfer PGD-20 ($\rho{=}0.9$). Offline adversarial training achieves highest accuracy under black-box transfer attacks, reversing its poor white-box performance. Mean over 3 seeds.}
\label{fig:defense-comparison-lrs}
\end{figure}

The clearest example is offline adversarial training. Under adaptive white-box evaluation, it collapses on all four datasets despite substantial category growth. Under transfer-based black-box evaluation, however, its behavior is dataset-dependent. On USPS and Fashion-MNIST, offline training attains the best black-box AURAC (63.8\% and 82.1\%), substantially exceeding vanilla (48.4\% and 79.8\%). By contrast, on MNIST and EMNIST-Letters, where vanilla already attains high black-box AURAC (92.1\% and 92.8\%), adversarial training degrades rather than improves transfer robustness. The USPS and Fashion-MNIST reversal directly illustrates Principle~1: in single-pass incremental learners, adversarial examples crafted once against a fixed clean-trained reference model can become stale after the classifier is further modified by adversarial training, because the final deployed decision boundary is no longer the one against which those perturbations were generated. As a result, offline adversarial training may learn perturbation patterns that transfer across models without conferring genuine robustness to adaptive attacks on the final adversarially trained model.

On EMNIST-Letters in particular, all six adversarial-training variants reduce black-box AURAC by 2–13 percentage points relative to vanilla, indicating that in regimes where the vanilla classifier already has high native black-box robustness, protocol choices that increase category count may widen rather than close the gap between category granularity and CNN-surrogate perturbation directions.

Online adversarial training removes the stale-attack mismatch of offline training, but its effect remains dataset-dependent. This indicates that regenerating perturbations on-the-fly is not sufficient; robustness also depends on which adversarial samples are learned and when they are introduced during streaming adaptation.

The most consistent replay-free gains come from two-stage selective training, which achieves the best overall white-box AURAC across all four datasets. This supports the broader conclusion that streaming robustness depends not only on attack strength but also on protocol design: progressive epsilon scheduling stabilizes adaptation, while selective filtering concentrates updates on genuinely vulnerable regions.

USPS provides an instructive case study where the diagnostics introduced in Section~\ref{sec:defense} are particularly visible. Geometry monitoring on USPS reveals a clear separation-collapse pattern: under selective adversarial training, minimum separation drops sharply while compactness remains high (Table~\ref{tab:separation-collapse}), indicating cross-class encroachment during adversarial updates to existing categories. This structural failure primarily harms high-$\epsilon$ robustness and motivates the separation-aware update rule in Algorithm~\ref{alg:sep-aware}.

\begin{table}[H]
\centering
\caption{Separation-collapse indicators on USPS ($\rho_a=0.9$). ``\# Categories'' denotes the number of learned categories. ``MS'' denotes the minimum-separation indicator in \eqref{eq:min-sep}. ``Compactness'' denotes the online iCVI compactness statistic computed from the current partition of learned categories/classes~\cite{moshtaghi2019online,dasilva2020icvi,dasilva2023icviartmap}. For this compactness statistic, smaller values indicate more concentrated within-partition structure, so the table should be interpreted jointly with MS rather than as an absolute stand-alone quality score.}
\label{tab:separation-collapse}
\setlength{\tabcolsep}{4pt}
\renewcommand{\arraystretch}{1.05}
\small
\begin{tabular}{@{}lccc@{}}
\toprule
Stage & \# Categories & MS & Compactness \\
\midrule
Before (clean only) & 6,058 & 0.036 & 0.994 \\
After (selective adv.) & 11,434 & 0.008 & 0.998 \\
\midrule
Change & $+$1.9$\times$ & $-$78\% & $+$0.4\% \\
\bottomrule
\end{tabular}
\end{table}

Separation-aware training, motivated by the separation-collapse diagnosis, matches AdvTrain (on) on USPS at $\epsilon=0.30$ (9.7$\pm$1.7\% vs.\ 10.6$\pm$1.2\%) and across category count and AURAC (13K cats, 26.2$\pm$0.9\% vs.\ 13K cats, 25.9$\pm$0.6\%); both exceed the selective family (5.4$\pm$0.6\% Two-Stage Sel., 2.6$\pm$0.9\% Selective on at $\epsilon=0.30$; Table~\ref{tab:usps-case-study}). The $\theta$ ablation (Table~\ref{tab:theta-ablation}) explains why: across $\theta\in[0.001,0.1]$, the overlap check rejects every absorption candidate, so the algorithm reduces to AdvTrain (on); larger $\theta$ admits cross-class hyperbox expansions and triggers separation collapse. Overlap-only gating therefore admits no operating point distinct from AdvTrain (on).

\begin{table}[H]
\centering
\caption{USPS High-$\epsilon$ Robustness (mean$\pm$std over 3 seeds).}
\label{tab:usps-case-study}
\small
\begin{tabular}{lccc}
\toprule
Method & \# Categories & Acc$_{\mathrm{adv}}$(0.25) & Acc$_{\mathrm{adv}}$(0.30) \\
\midrule
Selective (on) & 11.5K & 4.8$\pm$0.8 & 2.6$\pm$0.9 \\
Two-Stage Sel. & 13.7K & 11.7$\pm$1.0 & 5.4$\pm$0.6 \\
Sep-Aware & 13K & \textbf{11.7$\pm$1.9} & \textbf{9.7$\pm$1.7} \\
\bottomrule
\end{tabular}

\vspace{0.25em}
\footnotesize \emph{Acc$_{\mathrm{adv}}$(0.25) and Acc$_{\mathrm{adv}}$(0.30) denote adversarial accuracy (\%) at $\epsilon=0.25$ and $\epsilon=0.30$, respectively. Sep-Aware uses $\theta=0.01$. AdvTrain (on), shown in Table~\ref{tab:main-results}, achieves 10.6$\pm$1.2\% at $\epsilon=0.30$, statistically tied with Sep-Aware.}
\end{table}

\begin{table}[H]
\centering
\caption{Separation-Aware Threshold $\theta$ Ablation (USPS, $\rho{=}0.9$). Mean$\pm$std over 3 seeds.}
\label{tab:theta-ablation}
\small
\begin{tabular}{@{}lcccc@{}}
\toprule
$\theta$ & Clean & Adv@0.25 & Adv@0.30 & AURAC \\
\midrule
0.001 & 92.8$\pm$0.4 & 11.7$\pm$1.9 & 9.7$\pm$1.7 & 26.2$\pm$0.9 \\
0.005 & 92.8$\pm$0.4 & 11.7$\pm$1.9 & 9.7$\pm$1.7 & 26.2$\pm$0.9 \\
\textbf{0.01} & \textbf{92.8$\pm$0.4} & \textbf{11.7$\pm$1.9} & \textbf{9.7$\pm$1.7} & \textbf{26.2$\pm$0.9} \\
0.02 & 92.8$\pm$0.4 & 11.7$\pm$1.9 & 9.7$\pm$1.7 & 26.2$\pm$0.9 \\
0.05 & 92.8$\pm$0.4 & 11.7$\pm$1.9 & 9.7$\pm$1.7 & 26.2$\pm$0.9 \\
0.1   & 92.8$\pm$0.4 & 11.7$\pm$1.9 & 9.7$\pm$1.7 & 26.2$\pm$0.9 \\
\midrule
0.5   & 91.1$\pm$0.6 & 0.3$\pm$0.2 & 0.1$\pm$0.1 & 11.9$\pm$0.7 \\
1.0 (off) & 91.0$\pm$0.9 & 2.0$\pm$0.6 & 1.8$\pm$1.0 & 15.9$\pm$0.4 \\
\bottomrule
\end{tabular}

\vspace{0.3em}
\footnotesize
\emph{Adv@0.25} and \emph{Adv@0.30} denote adversarial accuracy (\%) at $\epsilon = 0.25$ and $\epsilon = 0.30$, respectively. AURAC (\%) = Area Under the Robust Accuracy Curve. Bold = default setting. Values for $\theta \in [0.001, 0.1]$ are bit-identical: the overlap check rejects every absorption candidate, so the update reduces to new-category creation (the AdvTrain-on path). At $\theta=0.5$ the check admits roughly 23\% of absorptions and induces separation collapse; at $\theta=1.0$ it is effectively disabled.
\end{table}

Table~\ref{tab:theta-ablation} characterizes the constraint across $\theta$. Across $\theta \in [0.001, 0.1]$, the overlap check rejects every absorption candidate, so the update reduces to the new-category creation path used by standard online adversarial training. Loosening to $\theta=0.5$ admits roughly 23\% of absorptions and triggers separation collapse (AURAC drops to 11.9\%); $\theta=1.0$ effectively disables the check (${\sim}83\%$ absorption) and yields 15.9\% AURAC. Across $\theta$, no operating point yields robustness beyond AdvTrain (on); overlap-only absorption gating in this form does not differentiate from AdvTrain (on) on streaming Fuzzy ARTMAP.

The second main empirical phenomenon is match-score ordering reversal. On vanilla models, adversarial examples tend to produce lower match scores than clean samples on USPS (0.987 vs.\ 0.997), which suggests that match could be used as a rejection signal. After selective adversarial training, however, that ordering can invert (0.983 vs.\ 0.968), and the corresponding match-threshold detector degrades sharply: AUC falls from 0.72 to 0.38, below chance. This observation is consistent with the theoretical development in Section~\ref{sec:defense}. Lemma~\ref{lem:match-preservation} establishes that selective absorption preserves the absorbed-category match locally, while Proposition~\ref{prop:match-nonmonotone} explains why post-training match need not remain monotonic with correctness globally. Empirically, the USPS inversion shows that this is not merely a theoretical possibility: internal scores that are reliable in the vanilla regime can become unreliable after adversarial training. In deployment terms, rejection, abstention, or escalation rules that treat match as a trust signal must therefore be recalibrated jointly with the training protocol rather than transferred unchanged from the vanilla model.

Because the main robustness results in Tables~\ref{tab:main-results}--\ref{tab:bb-results} are reported on the clean-correct subset, Appendix~B provides a derived unconditional view by converting the reported clean accuracy and conditional robustness into unconditional robust accuracy and unconditional AURAC. The results show that the main qualitative conclusions remain unchanged: two-stage selective training remains the strongest overall replay-free defense, and separation-aware training remains statistically tied with standard online adversarial training at high~\(\epsilon\) on USPS. Thus, the main ranking claims are not merely an artifact of conditioning on correct clean prediction.

\subsection{Additional Analyses and Verification}

Table~\ref{tab:attack-strength} further verifies that the proposed WB-Softmax threat model is a strong adaptive white-box evaluation protocol. On vanilla models at $\rho_a=0.9$, WB-Softmax consistently exceeds or matches the strongest black-box baselines overall. Transfer remains relatively strong on USPS, but is much weaker on MNIST and EMNIST-Letters and only partially competitive on Fashion-MNIST. These results support the interpretation that the white-box findings above reflect genuine vulnerability rather than weak optimization.

\noindent\textbf{WB-Softmax ablations.}
Across datasets, increasing PGD steps monotonically strengthens the attack and largely saturates by 20 steps, which motivates PGD-20 as the default evaluation setting. The temperature $\tau$ controls the tradeoff between gradient smoothness and winner concentration. Table~\ref{tab:tau-ablation} reports attack success as a function of $\tau$ on USPS. Smaller $\tau$ values sharpen the softmax distribution and strengthen the attack; at the same time, very small temperatures can create numerical instability. At $\tau=0.01$, the attack achieves 99.4\% success at $\epsilon=0.30$, essentially matching the strongest settings while remaining numerically stable. This supports our choice of $\tau=0.01$ as a robust default.

\begin{table}[H]
\centering
\caption{WB-Softmax Attack Strength vs.\ Temperature $\tau$ (USPS, PGD-20)}
\label{tab:tau-ablation}
\small
\begin{tabular}{@{}lccc@{}}
\toprule
$\tau$ & $\epsilon{=}0.20$ & $\epsilon{=}0.30$ & $\epsilon{=}0.35$ \\
\midrule
0.005 & 94.7 & 98.9 & 99.4 \\
\textbf{0.01} & \textbf{93.8} & \textbf{99.4} & \textbf{99.8} \\
0.02 & 77.3 & 97.0 & 98.1 \\
0.05 & 60.1 & 90.0 & 94.3 \\
0.10 & 54.1 & 86.0 & 93.0 \\
\bottomrule
\end{tabular}
\vspace{0.3em}

\small{Attack success rate (\%) on $n{=}500$ clean-correct samples. Lower $\tau$ concentrates probability on the winner, strengthening the attack. Bold = default setting.}
\end{table}

\begin{table}[H]
\centering
\caption{Attack Strength Comparison (Vanilla Models, $\rho=0.9$, $\epsilon=0.35$)}
\label{tab:attack-strength}
\small
\begin{tabular}{lcccc}
\toprule
Attack & USPS & MNIST & FMNIST & EMNIST-L \\
\midrule
WB-Softmax PGD & 100 & 89 & 100 & 96 \\
BB Transfer PGD & 97 & 18 & 41 & 12 \\
LRS Transfer PGD & 98 & 19 & 84 & 26 \\
Square (query) & 64 & 28 & 61  & 80 \\
\bottomrule
\end{tabular}
\vspace{0.3em}

\small{Attack success rate (\%) on $n{=}1000$ clean-correct samples. PGD uses 20 steps with step size $\eta = \epsilon/4$. Square Attack uses 5000 queries/sample.}
\end{table}

Category growth alone does not explain robustness. All adversarial training variants increase category count, but the resulting robustness varies sharply. The clearest counterexample is offline adversarial training on MNIST: it creates 103K categories, among the largest models considered, yet yields the worst AURAC (14.8\%). By contrast, two-stage selective training achieves the best robustness with more moderate growth. This indicates that, in ARTMAP, robustness depends more on how categories are created, updated, and scheduled than on how many categories are ultimately formed.

\noindent\textbf{Robustness-per-compute tradeoff.}
Table~\ref{tab:robustness-compute} compares training cost (wall-clock time and category count) against robustness (AURAC under FGSM and PGD-20 evaluation) for selective training with varying attack strengths during training. Stronger training attacks increase both computational cost and resulting robustness, but with diminishing returns. PGD-20 selective training attains the highest AURAC against PGD-20 (22.1\%) at 78.9\,s, whereas PGD-10 attains 19.4\% at 49.7\,s and FGSM attains 19.0\% at only 17.3\,s. These results suggest that moderate training attack strengths (PGD-5 to PGD-10) may offer a favorable efficiency--robustness tradeoff in deployment scenarios where training cost is constrained.

\begin{table}[H]
\centering
\caption{Robustness-per-Compute Tradeoff (USPS, Selective Training)}
\label{tab:robustness-compute}
\small
\begin{tabular}{@{}lcccc@{}}
\toprule
Training & Time (s) & \#Cat. & \multicolumn{2}{c}{AURAC (\%)} \\
\cmidrule(lr){4-5}
Attack &  &  & vs FGSM & vs PGD-20 \\
\midrule
Vanilla & 7.4  & 6.2K  & 72.0 & 18.6 \\
FGSM    & 17.3 & 7.2K  & 72.2 & 19.0 \\
PGD-5   & 34.1 & 10.9K & 73.1 & 17.5 \\
PGD-10  & 49.7 & 11.4K & 73.5 & 19.4 \\
PGD-20  & 78.9 & 11.5K & 73.5 & \textbf{22.1} \\
\bottomrule
\end{tabular}
\vspace{0.3em}

\small{Mean over 3 seeds. Time = training wall-clock time. \#Cat. = category count. AURAC is evaluated against FGSM and PGD-20 attacks over $\epsilon \in [0.05, 0.35]$.}
\end{table}

Finally, Table~\ref{tab:sanity-checks} verifies that the reported robustness is not an artifact of gradient masking. All four adapted checks are satisfied on all four datasets: iterative PGD is at least as strong as FGSM, adaptive white-box attacks match or exceed transfer attacks, the smooth WB-Softmax surrogate is stronger than hard winner-take-all optimization, and accuracy degrades smoothly as $\epsilon$ increases. Together with the attack-strength results above, these checks support the validity of the empirical conclusions in this section.

\begin{table}[H]
\centering
\caption{Anti-Gradient-Masking Sanity Checks ($\rho{=}0.9$)}
\label{tab:sanity-checks}
\small
\begin{tabular}{@{}lcccc@{}}
\toprule
Sanity Check & USPS & MNIST & FMNIST & EMNIST-L\\
\midrule
PGD $\geq$ FGSM & \checkmark & \checkmark & \checkmark & \checkmark \\
WB $\geq$ BB transfer & \checkmark & \checkmark & \checkmark & \checkmark \\
Smooth $>$ Hard ops & \checkmark & \checkmark & \checkmark & \checkmark \\
Smooth acc-vs-$\epsilon$ & \checkmark & \checkmark & \checkmark & \checkmark \\
\bottomrule
\end{tabular}
\vspace{0.3em}

\small{Evaluated on vanilla models at $\epsilon{=}0.35$. WB = WB-Softmax, BB = black-box.}
\end{table}

\FloatBarrier

\section{Conclusion}
\label{sec:conclusion}

We presented a systematic study of adversarial robustness in Fuzzy ARTMAP under strict single-pass streaming constraints. The central message is that robustness in this regime cannot be studied by simply transplanting the methodology of offline deep networks. Because ARTMAP predicts through winner-take-all category competition and evolves continuously through replay-free updates, both attack construction and robustness evaluation must be aligned with the model’s native mechanism and final streamed state.

Our results support this claim along three dimensions. First, WB-Softmax provides a strong mechanism-aligned adaptive white-box evaluator, showing that meaningful gradients and severe vulnerabilities remain present despite ARTMAP’s non-smooth competition structure. Second, robustness outcomes depend critically on protocol design in streaming learners: offline adversarial training can appear effective under transfer-based evaluation while collapsing under adaptive white-box attacks on the final deployed model, whereas progressive two-stage selective training yields the strongest overall replay-free robustness across USPS, MNIST, Fashion-MNIST, and EMNIST-Letters. Third, ART’s explicit category geometry is not only interpretable but operationally useful for diagnostics: geometry monitoring reveals separation collapse as a structural failure mode, and match-score analysis exposes a distinct semantic reliability failure (match-score inversion). The geometric diagnosis admits a natural absorption-gating rule (Sep-Aware), and we characterize its behavior fully: at the recommended $\theta$, the rule's update path coincides with AdvTrain (on) because adversarial examples lie in regions where absorption-driven hyperbox expansion crosses class boundaries. The diagnostic framework therefore makes the structural constraint on overlap-only gating explicit and motivates the alternative gating criteria discussed below.

Taken together, these findings show that adversarial robustness in streaming prototype-based learners is simultaneously a problem of attack alignment, protocol design, and post-training reliability. More broadly, the framework developed here suggests that interpretable internal structure can play a dual role in streaming robustness research: it can improve diagnosis of failure modes and also support targeted replay-free interventions that would be difficult to design in black-box models.

Several directions remain open. First, our study focuses on $\ell_\infty$ attacks and controlled small-to-medium image benchmarks; extending the analysis to larger datasets, higher-dimensional inputs, real-world streams, and additional modalities is an important next step. Another important direction is cross-architecture transfer from ART-generated adversarial examples to offline deep models. The present work focuses on the inverse direction---using deep surrogate transfer attacks as black-box evaluators for ARTMAP---and on adaptive white-box evaluation of the final streamed ARTMAP model. Studying whether adversarial examples generated from ART category geometry transfer to offline convolutional architectures would further clarify the interaction between prototype-based streaming learners and mainstream deep models. Second, certified robustness, distributionally robust guarantees, and tighter theoretical characterizations of geometry evolution in ART remain largely unexplored. Third, the structural limitation of overlap-only gating motivates richer absorption criteria: margin-based gating, delta-overlap constraints (rejecting only absorptions that \emph{increase} cross-class overlap), and non-geometric criteria such as match-score statistics are natural next interventions within the diagnosis-to-rule framework. Finally, the observed match-score inversion suggests a broader research direction at the intersection of streaming robustness and deployment reliability: understanding when interpretable internal scores remain valid after adversarial adaptation, and how they should be recalibrated when they do not.

\section*{Acknowledgment}

This research was sponsored by the Army Research Laboratory and was accomplished under Cooperative Agreement Number W911NF-22-2-0209.
This research was supported by NSF grant 2420248 and by the Kummer Institute, Mary Finley Endowment, and Intelligent Systems Center of the Missouri University of Science and Technology.

The views and conclusions contained in this document are those of the authors and should not be interpreted as representing the official policies, either expressed or implied, of the Army Research Laboratory or the U.S. Government.
The U.S. Government is authorized to reproduce and distribute reprints for Government purposes, notwithstanding any copyright notation herein.

The computation for this work was performed on the high-performance computing infrastructure provided by Research Support Solutions at Missouri University of Science and Technology https://doi.org/10.71674/PH64-N397

\bibliographystyle{elsarticle-num}
\bibliography{references}

\newpage

\appendix
\section*{Appendix}
\addcontentsline{toc}{section}{Appendix: Supplementary Material}
This appendix provides additional theoretical and empirical details referenced in the main paper. Specifically, it includes: (i) a constructive proof of Proposition~1, and (ii) auxiliary derived unconditional robustness tables corresponding to the conditional clean-correct evaluation reported in the main text.

\section{Proof of Proposition 1}
\label{app:proof-match}

\textit{Proof.} We provide a constructive argument for why selective adversarial training can break the monotone relationship between post-training match score and correctness.

\textit{Setup.}
Let
\begin{equation}
I(\bm{x}) = [\bm{x};\, 1-\bm{x}]
\label{eq:supp-complement}
\end{equation}
denote the complement-coded input, and let category $j$ have weight vector $\bm{w}_j$. The ARTMAP category-level match function is
\begin{equation}
M_j(I) = \frac{|I \wedge \bm{w}_j|}{|I|}.
\label{eq:supp-match}
\end{equation}
Under complement coding, $|I(\bm{x})|$ is constant for a fixed input dimension, so comparisons of category-level match reduce to comparisons of the numerator $|I \wedge \bm{w}_j|$.

Under fast learning, when a sample with code $I$ is absorbed by category $j$, the update is
\begin{equation}
\bm{w}_j^{\mathrm{new}} = \bm{w}_j \wedge I,
\label{eq:supp-fast-learning}
\end{equation}
which can be interpreted as shrinking the category hyperbox toward the presented sample in complement-coded space.

\textit{Step 1: Selective training preferentially updates on misclassified adversarial samples.}
By definition of selective adversarial training, only adversarial samples $\bm{x}'$ with
\begin{equation}
f(\bm{x}') \neq y
\label{eq:supp-misclassified}
\end{equation}
trigger an update using label $y$, either by absorption into an existing correct-class category or by creation of a new correct-class category. Clean samples that are already correctly classified do not trigger comparable updates under the same rule.

\textit{Step 2: In the absorbed-category case, match is preserved.}
Step 2 establishes the match-preservation lemma for the absorbed-category case. Consider a misclassified adversarial sample $\bm{x}'$ that is absorbed into an existing correct-class category $j$. For that same adversarial code $I(\bm{x}')$, after the update we have
\begin{equation}
I(\bm{x}') \wedge \bm{w}_j^{\mathrm{new}}
=
I(\bm{x}') \wedge (\bm{w}_j \wedge I(\bm{x}'))
=
I(\bm{x}') \wedge \bm{w}_j.
\label{eq:supp-absorb-identity}
\end{equation}
Therefore,
\begin{equation}
\begin{aligned}
M_j^{\mathrm{new}}(I(\bm{x}'))
&=
\frac{|I(\bm{x}') \wedge \bm{w}_j^{\mathrm{new}}|}{|I(\bm{x}')|} \\
&=
\frac{|I(\bm{x}') \wedge \bm{w}_j|}{|I(\bm{x}')|} \\
&=
M_j^{\mathrm{old}}(I(\bm{x}')).
\end{aligned}
\label{eq:supp-match-preservation}
\end{equation}
Hence, for the absorbed-category case, fast learning preserves the category-level match of the updated adversarial sample exactly:
\begin{equation}
M_j^{\mathrm{new}}(I(\bm{x}')) = M_j^{\mathrm{old}}(I(\bm{x}')).
\label{eq:supp-preserved}
\end{equation}

In the special case of category creation, if a new correct-class category is initialized at
\begin{equation}
\bm{w}_{j_{\mathrm{new}}}^{\mathrm{new}} = I(\bm{x}'),
\label{eq:supp-category-creation}
\end{equation}
then
\begin{equation}
M_{j_{\mathrm{new}}}(I(\bm{x}')) = 1.
\label{eq:supp-newcat-match}
\end{equation}
Thus, while absorbed-category match is preserved, higher attained match values can arise through category creation rather than ordinary absorption.

\textit{Step 3: Preserved or high match does not imply correctness.}
Match is a geometric quantity defined relative to category structure, whereas the final prediction $f(\cdot)$ is determined by winner-take-all competition together with the map field. Consequently, correctness is not determined by category-level match alone.

In particular, after selective adversarial training, there can exist regimes in which:
\begin{enumerate}
    \item an input $\bm{x}_1$ attains relatively high post-training match to an absorbed or newly created category, yet is still misclassified because a competing category wins the global competition or maps to a different class; while
    \item another input $\bm{x}_2$ attains lower post-training match, yet is correctly classified because the winning category and map-field assignment are favorable.
\end{enumerate}
Therefore, it is possible to have

\[
M_{j_1}(I(\bm{x}_1)) > M_{j_2}(I(\bm{x}_2)), \qquad
f(\bm{x}_1)\neq y_1, \qquad
f(\bm{x}_2)=y_2.
\]

Combining Steps 1--3 yields the claim of Proposition~1: after selective adversarial training, post-training match need not remain a monotone proxy for correctness. Consequently, match-threshold rejection rules calibrated on vanilla models need not remain valid after selective adversarial training.
\hfill$\square$

\section{Derived Unconditional Robustness Tables}
\label{app:unconditional}

The main robustness results in Tables~\ref{tab:main-results}--\ref{tab:bb-results} of the main paper are reported on the clean-correct subset. To provide an additional reference view, we derive unconditional robust accuracy and unconditional AURAC directly from the reported clean accuracy and conditional robustness values.

Let
\begin{equation}
\mathrm{CondRob}(\epsilon)=P\!\left(f(\bm{x}^{\mathrm{adv}}_{\epsilon})=y \mid f(\bm{x})=y\right)
\label{eq:supp-condrob}
\end{equation}
denote the conditional robust accuracy at perturbation level $\epsilon$, and let
\begin{equation}
\mathrm{CleanAcc}=P(f(\bm{x})=y)
\label{eq:supp-cleanacc}
\end{equation}
denote the clean accuracy. Then the corresponding unconditional robust accuracy is
\begin{equation}
\begin{aligned}
\mathrm{UncondRob}(\epsilon)
&=
P\!\big(f(\bm{x})=y,\ f(\bm{x}^{\mathrm{adv}}_{\epsilon})=y\big) \\
&=
\mathrm{CleanAcc}\times \mathrm{CondRob}(\epsilon).
\end{aligned}
\label{eq:supp-uncondrob}
\end{equation}

Accordingly, for the reported point metric at $\epsilon = 0.30$,
\begin{equation}
\mathrm{Uncond.Adv@0.30}=\frac{\mathrm{CleanAcc}\times \mathrm{Adv@0.30}}{100},
\label{eq:supp-uncond-adv030}
\end{equation}
where all quantities are expressed in percent.

Similarly, since the reported AURAC is computed from the robust-accuracy curve over $\epsilon \in [0.05, 0.35]$, the corresponding unconditional AURAC is derived as
\begin{equation}
\mathrm{Uncond.AURAC}=\frac{\mathrm{CleanAcc}\times \mathrm{AURAC}}{100}.
\label{eq:supp-uncond-aurac}
\end{equation}

The tables below are therefore auxiliary derived indicators computed from the reported means in Tables~\ref{tab:main-results}--\ref{tab:bb-results} of the main paper. They should be interpreted as a complementary unconditional view rather than re-evaluated experimental measurements.

\begin{table}[t]
\centering
\caption{Derived Unconditional White-Box Robustness}
\label{tab:appendix-uncond-wb}
\setlength{\tabcolsep}{4pt}
\small
\begin{tabular}{l cc cc}
\toprule
& \multicolumn{2}{c}{USPS} & \multicolumn{2}{c}{MNIST} \\
\cmidrule(lr){2-3} \cmidrule(lr){4-5}
Method & Uncond.Adv@0.30 & Uncond.AURAC & Uncond.Adv@0.30 & Uncond.AURAC \\
\midrule
Vanilla         & 0.46  & 20.03 & 30.08 & 58.36 \\
AdvTrain (off)  & 0.28  & 11.64 & 0.00  & 13.99 \\
AdvTrain (on)   & \textbf{9.85}  & 24.06 & 19.86 & 48.18 \\
Selective (off) & 2.77  & 23.42 & 24.85 & 50.65 \\
Selective (on)  & 2.40  & 22.80 & 25.34 & 50.30 \\
Sep-Aware       & 9.00  & 24.31 & 23.64 & 48.89 \\
Two-Stage Sel.  & 4.97  & \textbf{25.97} & \textbf{43.28} & \textbf{60.95} \\
\midrule
& \multicolumn{2}{c}{Fashion-MNIST} & \multicolumn{2}{c}{EMNIST-Letters} \\
\cmidrule(lr){2-3} \cmidrule(lr){4-5}
Method & Uncond.Adv@0.30 & Uncond.AURAC & Uncond.Adv@0.30 & Uncond.AURAC \\
\midrule
Vanilla         & 0.73  & 19.69 & 9.86  & 29.97 \\
AdvTrain (off)  & 0.90  & 12.25 & 0.00  & 10.58 \\
AdvTrain (on)   & 17.14 & 28.15 & 4.66  & 21.63 \\
Selective (off) & 12.61 & 28.92 & 6.29  & 21.95 \\
Selective (on)  & 13.79 & 28.22 & 5.17  & 20.82 \\
Sep-Aware       & 16.99 & 28.77 & 4.10  & 21.52 \\
Two-Stage Sel.  & \textbf{18.95} & \textbf{34.03} & \textbf{18.16} & \textbf{32.12} \\
\bottomrule
\end{tabular}
\vspace{0.3em}

\small{Derived from Table~\ref{tab:main-results} of the main paper using
$\mathrm{Uncond.Adv@0.30}=\mathrm{Clean}\times \mathrm{Adv@0.30}/100$
and
$\mathrm{Uncond.AURAC}=\mathrm{Clean}\times \mathrm{AURAC}/100$.
Values are computed from reported means and are therefore auxiliary derived indicators rather than re-evaluated experimental measurements. Bold = best per dataset.}
\end{table}

\begin{table}[t]
\centering
\caption{Derived Unconditional Black-Box Transfer Robustness}
\label{tab:appendix-uncond-bb}
\setlength{\tabcolsep}{4pt}
\small
\begin{tabular}{l cc cc}
\toprule
& \multicolumn{2}{c}{USPS} & \multicolumn{2}{c}{MNIST} \\
\cmidrule(lr){2-3} \cmidrule(lr){4-5}
Method & Uncond.Adv@0.30 & Uncond.AURAC & Uncond.Adv@0.30 & Uncond.AURAC \\
\midrule
Vanilla         & 8.39  & 44.62 & 81.08 & 87.03 \\
AdvTrain (off)  & \textbf{38.35} & \textbf{58.95} & 76.64 & 84.29 \\
AdvTrain (on)   & 23.88 & 53.23 & 74.72 & 83.00 \\
Selective (off) & 26.89 & 53.69 & 83.82 & 88.17 \\
Selective (on)  & 22.20 & 52.08 & 84.29 & 88.36 \\
Sep-Aware       & 23.94 & 53.64 & 73.48 & 82.42 \\
Two-Stage Sel.  & 27.63 & 54.71 & \textbf{84.48} & \textbf{88.55} \\
\midrule
& \multicolumn{2}{c}{Fashion-MNIST} & \multicolumn{2}{c}{EMNIST-Letters} \\
\cmidrule(lr){2-3} \cmidrule(lr){4-5}
Method & Uncond.Adv@0.30 & Uncond.AURAC & Uncond.Adv@0.30 & Uncond.AURAC \\
\midrule
Vanilla         & 55.73 & 65.60 & \textbf{71.90} & \textbf{74.89} \\
AdvTrain (off)  & \textbf{59.59} & \textbf{67.49} & 57.06 & 65.53 \\
AdvTrain (on)   & 56.80 & 66.16 & 56.28 & 63.92 \\
Selective (off) & 55.62 & 65.84 & 66.42 & 72.55 \\
Selective (on)  & 55.96 & 65.69 & 67.21 & 72.59 \\
Sep-Aware       & 56.62 & 66.07 & 58.46 & 64.88 \\
Two-Stage Sel.  & 56.86 & 66.33 & 69.00 & 73.20 \\
\bottomrule
\end{tabular}
\vspace{0.3em}

\small{Derived from the clean accuracy and conditional robustness values reported in Table~\ref{tab:bb-results} of the main paper using
$\mathrm{Uncond.Adv@0.30}=\mathrm{Clean}\times \mathrm{Adv@0.30}/100$
and
$\mathrm{Uncond.AURAC}=\mathrm{Clean}\times \mathrm{AURAC}/100$.
Values are computed from reported means and are therefore auxiliary derived indicators rather than re-evaluated experimental measurements. Bold = best per dataset.}
\end{table}

Tables~\ref{tab:appendix-uncond-wb}--\ref{tab:appendix-uncond-bb} indicate that the main qualitative conclusions of the paper are preserved under this derived unconditional view. Under white-box evaluation, two-stage selective training is the strongest overall replay-free defense, achieving the best mean unconditional AURAC on all four datasets and the best mean unconditional robustness at $\epsilon=0.30$ on MNIST, Fashion-MNIST, and EMNIST-Letters; on USPS, online adversarial training (AdvTrain (on)) achieves the best mean unconditional robustness at $\epsilon=0.30$, statistically tied with separation-aware training. Under black-box transfer evaluation, the same protocol-dependent ranking behavior remains visible: offline adversarial training is strongest on USPS and Fashion-MNIST, two-stage selective training is strongest on MNIST, and vanilla is strongest on EMNIST-Letters. These auxiliary tables therefore support the claim that the main ranking conclusions are not merely artifacts of conditioning on clean-correct samples.

\end{document}